\newcommand{\vx}{\bm{x}}
\newcommand{\vV}{\bm{V}}
\newcommand{\vX}{\bm{X}}
\newcommand{\vL}{\bm{L}}
\newcommand{\vD}{\bm{D}}
\newcommand{\vS}{\bm{S}}
\newcommand{\cT}{\mathcal{T}}
\newcommand{\cL}{\mathcal{L}}
\newcommand{\vI}{\bm{I}}
\newcommand{\vv}{\bm{v}}
\newcommand{\vh}{\bm{h}}
\newcommand{\mH}{\bm{H}}
\begin{document}
	
	\title{Uniform tensor clustering by jointly exploring sample affinities of various orders}
	
	\author{Hongmin Cai$^{1\ast}$, Fei~Qi$^{1,2}$, Junyu~Li$^{1}$, Yu~Hu$^{1}$, Yue~Zhang$^{3}$,Yiu-ming Cheung$^{5}$, and Bin~Hu$^{4\ast}$
	\thanks{$^{1}$School of Computer Science and Engineering, South China University of Technology, Guangzhou, 510006, China. $^{2}$School of Data Science and Information Engineering, Guizhou Minzu University, China. $^{3}$ School of Computer Science, Guangdong Polytechnic Normal University, Guangzhou 510665, China. $^{4}$ School of Medical Technology, Beijing Institute of Technology, Beijing, China. $^{4}$ Department of Computer Science, Hong Kong Baptist University, Hong Kong SAR, China  $^\ast$Corresponding Author, E-mail: hmcai@scut.edu.cn, bh@bit.edu.cn.}
}

	\IEEEtitleabstractindextext{%
		\begin{abstract}
			Conventional clustering methods based on pairwise affinity usually suffer from the concentration effect while processing huge dimensional features yet low sample sizes data, resulting in inaccuracy to encode the sample proximity and suboptimal performance in clustering.
			To address this issue, we propose a unified tensor clustering method (UTC) that characterizes sample proximity using multiple samples' affinity, thereby supplementing rich spatial sample distributions to boost clustering. Specifically, we find that the triadic tensor affinity can be constructed via the Khari-Rao product of two affinity matrices.
			Furthermore, our early work shows that the fourth-order tensor affinity is defined by the Kronecker product. Therefore, we utilize arithmetical products, Khatri-Rao and Kronecker products, to mathematically integrate different orders of affinity into a unified tensor clustering framework. Thus, the UTC jointly learns a joint low-dimensional embedding to combine various orders. Finally, a numerical scheme is designed to solve the problem. Experiments on synthetic datasets and real-world datasets demonstrate that 1) the usage of high-order tensor affinity could provide a supplementary characterization of sample proximity to the popular affinity matrix; 2) the proposed method of UTC is affirmed to enhance clustering by exploiting different order affinities when processing high-dimensional data.
		\end{abstract}
		
		\begin{IEEEkeywords}
			High-order affinity, Clustering, Fusing affinity, Tensor, Spectral graph.
	\end{IEEEkeywords}}
	
	\maketitle
	
	\IEEEpeerreviewmaketitle
	
	\IEEEraisesectionheading{\section{Introduction}\label{sec:introduction}}
	\IEEEPARstart{C}{lustering} aims to partition unlabeled data into distinct groups~\cite{ref1}. Traditional methods, such as K-means~\cite{ref3} and spectral clustering~\cite{Ncut,RatioCut}, have been well studied, and many variants have been designed to cater to various needs. However, with advances in observatory equipment, an object of interest could be associated with high or even huge dimensional features. For example, in bioinformatics, most single cells isolated from tissue are damaged during sequencing. Thus, only a limited number of cells can be successfully sequenced, resulting in a sample matrix with a few samples and millions of gene reads~\cite{li2020modern,tian2019clustering}. Similar scenarios are rising across various fields, such as computer vision~\cite{9000790,8725928} and natural language processing.   Accurate clustering  high-dimension $m$ yet low-sample-size $n$ (HDLSS) data remains challenging when  $n \gg m$ ~\cite{yang2018clustering,chen2019knn,xu2018accelerated}.
	
	The critical challenge that hinders the clustering of HDLSS data is  called the concentration effect~\cite{aggarwal2001surprising,franccois2007concentration}, which refers to the situation that the pairwise Euclidean distance among the samples collapses to a constant, thus resulting in the sample-wise affinity tending to be indiscriminating in high feature dimensions~\cite{SarkarGhosh-350,BorysovHannig-352,HallMarron-351}. Such an effect exists as an insurmountable roadblock that hinders most clustering methods, which rely on  the pairwise affinity of samples, from achieving precise clustering. Early works attempted to learn adaptive sample affinity to develop metrics driven by data.  For example, CAN~\cite{CAN} proposed a model that dynamically learns the affinity matrix by assigning the adaptive neighbors for each data point based on  local distances. However, the measurement of local distance remains a characterization of sample-wise affinity and thus still suffers from the concentration effects and performs unsatisfactorily.

	To overcome such drawbacks, many researchers have proposed to incorporate the spatial distribution of multiple samples to enhance clustering. Such high-order characterization on the local structure of multiple points can effectively overcome the shortcomings of pairwise similarity and enhance the clustering effect~\cite{ref12}~\cite{ref13}. For example, \cite{SpectralHypergraphE} generalizes the spectral methods operating on a pairwise similarity graph to a hypergraph that can represent high-order affinities and develops an algorithm to obtain the hypergraph embedding for clustering. \cite{CAGE} proposes to maintain a weight graph to approximate the hypergraph generated by high-order association and then use a spectral method to accomplish graph partitioning. \cite{ref11} considers the hypergraph clustering problem as a multiplayer noncooperative game from a game-theoretic approach. Like the CAN method, \cite{DHSL} first proposes a way to learn the dynamic hypergraph for clustering. However, the above method must eventually convert the hypergraph into an approximate weighted graph. Therefore, the clustering task is still performed on sample affinities, so higher-order affinity is not properly exploited to overcome the drawbacks of pairwise relations.

	Recently, the use of tensors to encode high-order affinities has been studied in clustering. \cite{Zeigen} introduce the characteristic tensor of a hypergraph and its associated $Z$-eigenvalue, providing natural relations for the structure of data. \cite{AAAI} denotes the order-$k$ affinities by a symmetric tensor and relaxes the hypergraph clustering to solve the multilinear singular value decomposition problem. However, these methods can use only a specific order of affinity for clustering, which cannot completely express the data structure. Our early work of IPS2~\cite{ref15} concentrates on using the high-order structural information of data that can be obtained from relationships between two pairs to enhance clustering performance. The authors establish an equivalence between the even-order affinity and the pairwise second-order affinity.
	However, there are still two problems that need to be solved: (1) the connection between odd-order tensor affinity and the second-order matrix affinity remains unknown; (2) how to jointly utilize affinities with different orders to accomplish  clustering has yet to be studied.
	
	To address these two questions, this paper first establishes a connection between the second-order and third-order tensor affinities via the Khatri-Rao product. Then, we propose a unified tensor clustering (UTC) model to learn a low-dimensional embedding by jointly using affinities of various orders. The popular second-order matrix affinity and  the triadic and  tetradic tensor affinities are fused to help characterize the proximity among samples.
	
	The main advantages of the method proposed in this paper are summarized as follow.
	\begin{itemize}
		\item[1)] A connection between pairwise matrix affinity and the triadic tensor affinity among three samples is established. An undecomposable third-order tensor is designed to provide the geometric proximity among three samples, thus serving as a supplement to the pairwise similarity matrix.
		\item[2)] The third-order tensor affinity is defined  via the Khatri-Rao product, while the Kronecker product is used to define the fourth-order tensor affinity. Therefore, the popular matrix affinity defined by the arithmetical product is surprisingly related to the Khatri-Rao and Kronecker products.
		\item[3)] A uniform tensor clustering method (UTC) is developed. The UTC learns a  low-dimensional embedding jointly based on affinities of various orders. The UTC is formulated elegantly and works on affinities of any order.
		\item[4)] Extensive experiments on HDLSS data demonstrate that the UTC achieved superior performance compared with that of baseline methods. In addition, the experiments show that using high-order affinities to characterize the sample spatial distribution can improve clustering performance, especially in cases of small sample sizes.
		
	\end{itemize}

	\begin{figure*}[ht]
		\centering
		\includegraphics[width = 17cm]{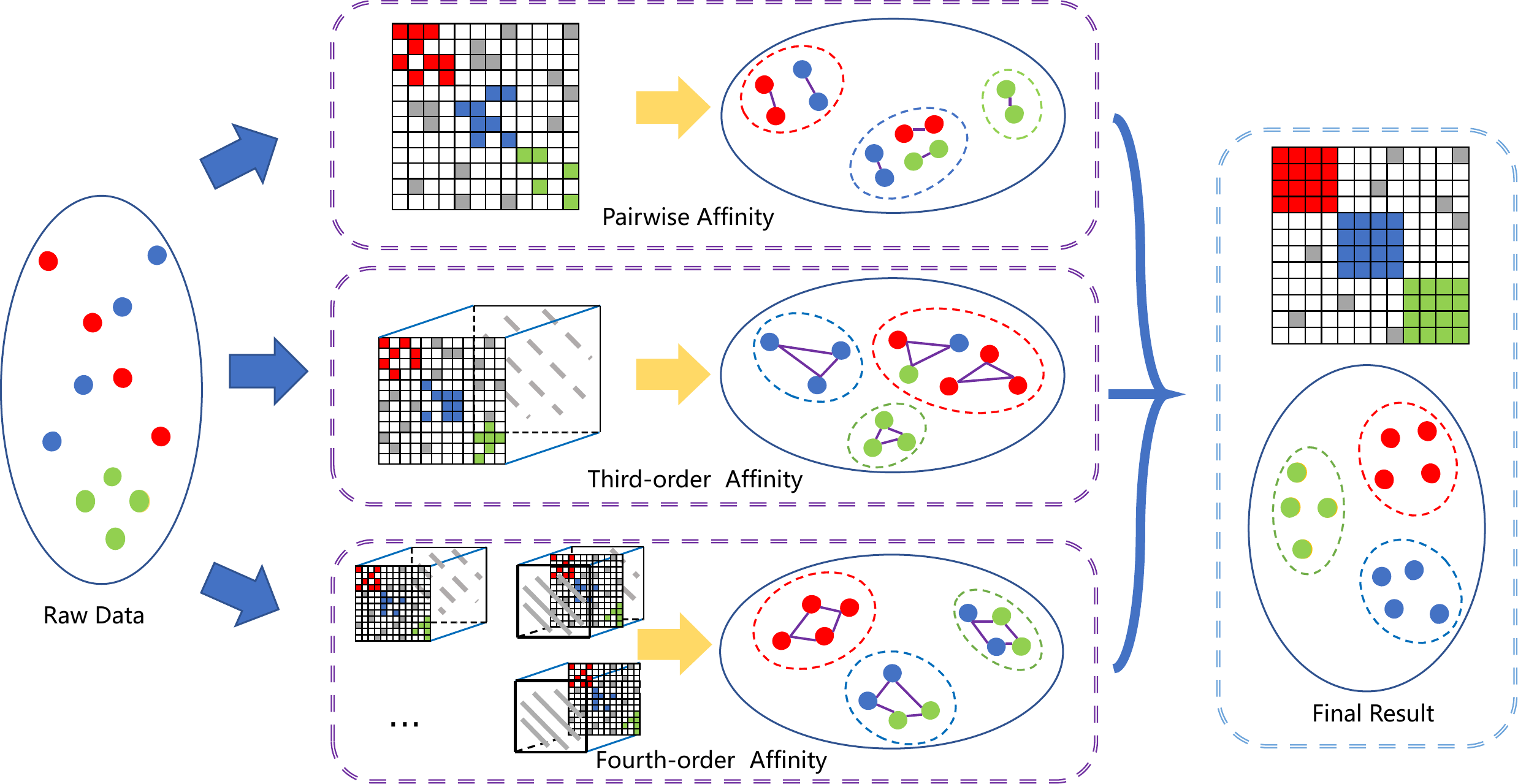}
		\caption{Demonstration of the UTC method, which aims to learn a uniform embedding via joint  optimization of   not only the pairwise matrix affinity  but also the triadic and tetradic tensor affinities among  more than two samples. The approach is combined with  the high-order tensor affinity to provide complementary spatial characterization for which the  pairwise affinity fails;   UTC learns a low-dimensional embedding and is robust to high-dimensional concentration effects and remains stable.}
		\label{fig1_farmework}
	\end{figure*}

	\section{NOTATIONS AND PRELIMINARIES}
	\subsection{Notations}
	Throughout the paper,  lowercase letters represent scalars, while bold lowercase letters represent vectors, such as $ \bm v \in \mathbb R^m$. Bold uppercase letters denote matrices, such as $ \bm M \in \mathbb R^{m\times n}$, while bold calligraphy letters represent tensors, such as, $ \cT \in \mathbb R^{ m \times n\times l}$.   Let $\cT(:,:,i)$, $\cT(:,i,:)$, $\cT(i,:,:)$ represent the $i$-th frontal, lateral and horizontal slices of an order-3 tensor $\mathcal T \in \mathbb R^{n_1\times n_2 \times n_3}$, respectively. $\cT(:,:,i)$ can be abbreviated as $\cT^{(i)}$. A tensor can be reformatted  into a matrix via series operations, called {\it unfolding}. For example, the three-order and four-order tensor unfold operations are as follows:
	
	\newtheorem{definition}{Definition}[section]
	\begin{definition}
		\textbf{Unfolding 3-order Tensor:} Let  $\cT_3 \in \mathbb R^{m\times m \times m}$ be an order-3 m-dimensional tensor. It can unfold to an $m^2\times m$ matrix $\hat {\bm T}_3$ as follows:
		\begin{equation}
			\begin{aligned}
				\hat {\bm T_3}= \operatorname{unfold}(\mathcal{T}_3)=\left[
				\begin{array}{c}
					\mathcal{T}_3^{(1)} \\
					\mathcal{T}_3^{(2)} \\
					\vdots \\
					\mathcal{T}_3^{\left(m\right)}
					\end
					{array}\right]
				\end{aligned}
				\label{unfolding_order_3}
			\end{equation}
		\end{definition}

		\begin{definition}
			\textbf{Unfolding 4-order Tensor} Let $\cT_4 \in \mathbb R^{m\times m \times m\times m}$ represent a fourth-order m-dimensional tensor. This tensor can unfold to an $ m^2\times m^2 $ matrix $\hat T_4$, with its (r,s)-th entry given by:
			\begin{equation}
				\begin{aligned}
					\hat {\bm T}_{4_{rs}} = \mathcal T_{4_{ijkl}}
				\end{aligned}
				\label{unfolding_order_4}
			\end{equation}
			with $ r=m(j-1)+i,s=m(l-1)+k, i,j,k,l\in [1, m] $.
		\end{definition}
		
		Three  matrix multiplications are considered, namely, Hadamard product, Kronecker product and Khatri-Rao product~\cite{def_tensor_product}.
		\begin{definition}
			\textbf{Hadamard Product}   The Hadamard product of two matrices $\bm A\in \mathbb R^{m\times n}$ and $\bm B\in \mathbb R^{m\times n}$ is defined as:
			
			\begin{equation}
				\begin{aligned}
					\bm A \odot \bm B = \left[ \begin{matrix}
						a_{11}b_{11}&   \cdots&   a_{1n}b_{1n}\\
						\vdots&   \ddots&   \vdots\\
						a_{m1}b_{m1}&   \cdots&   a_{mn}b_{mn}\\
					\end{matrix} \right] \in \mathbb R^{m\times n}
				\end{aligned}
			\end{equation}
		\end{definition}
		
		\begin{definition}
			\textbf{Kronecker Product} The Kronecker product of two matrices $\bm A\in \mathbb R^{m_1\times n_1}$ and $\bm B\in \mathbb R^{m_2\times n_2}$ is defined as:
			\begin{equation}
				\begin{aligned}
					\bm{A}\otimes \bm{B}=\left[ \begin{matrix}
						a_{11}\bm{B}&   \cdots&   a_{1n}\bm{B}\\
						\vdots&   \ddots&   \vdots\\
						a_{m1}\bm{B}&   \cdots&   a_{mn}\bm{B}\\
					\end{matrix} \right] \in \mathbb R^{m_1m_2\times n_1n_2}
				\end{aligned}
			\end{equation}
		\end{definition}
		\begin{definition}
			\textbf{Khatri-rao Product} The Khatri-Rao product of two matrices $\bm A\in \mathbb R^{m_1\times n }$ and $\bm B\in \mathbb R^{m_2\times n }$ is defined as the matrix:
			\begin{equation}
				\begin{aligned}
					\bm A *\bm B=\left[ \begin{matrix}
						a_{:1}\otimes b_{:1}&   \cdots&   a_{:n}\otimes b_{:n}
					\end{matrix} \right] \in \mathbb R^{m_1m_2\times n}
				\end{aligned}
			\end{equation}
		\end{definition}
		
		A popular product between a tensor and a matrix is called a {\it $k$-mode product}.
		\begin{definition}
			\textbf{k-mode Product}  The $k$-mode product between an order-$m$ tensor $\bm {\mathcal T} \in \mathbb R^{n_1\times n_2\times \dots \times n_m}$ and a  matrix $V \in \mathbb R^{p\times n_k}$,  denoted by $\bm {\mathcal T} \otimes_k U \in \mathbb R^{n_1\times \dots \times n_{k-1} \times p \times n_{k+1} \times \dots \times n_m}$, with
			\begin{equation}
				\begin{aligned}
					(\bm{\mathcal T}\otimes_k \vV)_{i_1\dots i_{k-1}ji_{k+1}\dots i_m} = \sum^{n_k}_{i_k=1}\bm{\mathcal T}_{i_1 \dots i_{k-1} i_{k} i_{k+1} \dots i_m} \vV_{ji_{k}}
				\end{aligned}
			\end{equation}
		\end{definition}
		
		For convenience, we summarize  the frequently used notations and definitions in Table~\ref{Notions_Definitions}.
		
		\begin{table}[]
			\caption{ Summary of notations}
			\begin{center}
				\begin{tabular}{c|l}
					\hline
					$\vv \in \mathbb R^m$                                             & A vector      \\
					$\bm M \in \mathbb R^{m\times n} $                                & A matrix       \\
					$\cT \in \mathbb R^{n_1\times n_2 \times \dots \times n_k}$& An order-k tensor \\
					$\hat{\bm T}_k $& The unfolding order-k tensor            \\
					$Tr(\cdot)$       & The trace operation                     \\
					$\odot$          & The Hadamard product                     \\
					$*$              & The Khatri-Rao product                  \\
					$\otimes$        & The Kronecker product                    \\
					$\otimes_k$       & The k-mode product                       \\\hline
				\end{tabular}
			\end{center}
			\label{Notions_Definitions}
		\end{table}

		\subsection{Introduction of Spectral Clustering with Pairwise Similarity}
		
		Spectral clustering~\cite{Ncut} starts by computing a sample-wise affinity matrix $\bm S_2 \in\mathbb R^{m\times m }$. Every entry $\bm S_2(i,j)$  measures the proximity between two samples $x_i$ and $x_j$. Then, a normalized affinity matrix $\hat{\vL}_2$ is estimated with $\hat{\vL}_2=\bm D_2^{-\frac{1}{2}}\bm S_2\bm D_2^{-\frac{1}{2}}$, where $\bm D_2$ denotes the degree matrix of  $\bm S_2$ with $\bm D_{2_{ii}} = \sum_i \bm S_{2{ij}}$. The Laplacian matrix $\bm L$ can be generated through $\bm L = \bm I - \hat{\vL}_2$. Spectral graph clustering seeks a low-dimensional embedding of samples by maximizing the cross-entropy while preserving the local proximity of paired samples:
		\begin{eqnarray}\label{eq:sp}
			\begin{aligned}
				&\min_{\vV} \,\, tr(\vV^T\vL\vV)\\
				\text{ Subject to:~} \quad &\vV^T\vV=\vI\\
			\end{aligned}
		\end{eqnarray}
		It is a standard eigenvalue problem and thus can be efficiently solved by calculating the eigenvectors $\vV$ of the Laplacian matrix $\vL$, using methods such as singular value decomposition.
		Typically, the normalized cut of the graph is minimized in this model, but as stated in~\cite{max_N-cut}, this model can also be expressed as a maximization problem with the normalized pairwise affinity $\hat{\vL}_2$. By using the tensor $k$-mode product, the spectral clustering in Eq.~(\ref{eq:sp})  can be rewritten as:
		\begin{equation}
			\begin{aligned}
				& \max_{\vV} \,\, tr(\vV^T\hat{\vL}_2\vV)\\
				=&\max_{\vV} \,\, \sum_{j=1}^k \hat{\vL}_2\otimes _2\vv_j^T\otimes _1\vv_j^T\\
				&\text{ Subject to:~}  \vV^T\vV=\bm I
			\end{aligned}
			\label{SC_Obj_2}
		\end{equation}
		where $\vv_j$ is the $j^{th}$ column of the partition matrix $\vV$. This equation establishes the relationship between the membership $\vv_j$ and the pairwise affinity. Then, a clustering task is performed on the obtained embedding.

		\subsection{Introduction of IPS2 Clustering with Pair-to-Pair Similarity}
		The pairwise affinity is easily broken by noise contamination or concentration effects in HDLSS data. To address this issue, our recent work IPS2~\cite{ref15} attempted to  use high-order affinity among more than two samples. IPS2 used the fourth-order tensor affinity to measure the proximity of two sample pairs. The decomposable fourth-order tensor affinity  can be defined as the product of two pairwise similarities:
		\begin{equation}
			\begin{aligned}
				\cT_{4_{ijkl}} = \bm S_{2_{ik}}\bm S_{2_{jl}},\quad i,j,k,l\in m.
			\end{aligned}
		\end{equation}
		
		
		\newtheorem{theorem}{Theorem}[section]

		Let $\hat{\bm T}_4$ denote the matrix unfolded from a  fourth-order decomposable tensor affinity $\mathcal T_4$; it has been shown in ~\cite{ref15} that
		\begin{eqnarray}\label{eq:similarity_equvalence}
			\hat{\bm T}_4 = \bm S_2 \otimes \bm S_2
		\end{eqnarray}
		
		This result establishes a direct connection between a  fourth-order decomposable tensor affinity and the usual second-order matrix affinity $\bm S_2$. Now, consider a normalized affinity matrix $\hat{\bm L}_4$, computed by $\hat{\bm L}_4 = \hat {\bm D}_4^{-\frac{1}{2}}\hat{\bm T}_4 \hat{\bm D}_4^{-\frac{1}{2}}$. Here, the diagonal matrix $\hat{\bm D}_4$ is the degree matrix of $\hat{\bm T}_4$ with the diagonal elements being $\hat{\bm D}_{4_{ii}} = \sum_j \hat{\bm T}_{4_{ij}}$. It has been shown that the fourth-order and  second-order  normalized affinity matrix $\hat{\bm L}_4, \hat{\vL}_2$ shares a similar  relation as in Eq.~(\ref{eq:similarity_equvalence}):

		\begin{theorem}
			\textbf{The decomposable 4-order tensor~\cite{ref15}} Let $\hat{\vL}_2$ and $\mathcal L_4$
			denote the normalized similarity matrix and fourth-order normalized affinity tensor with the unfolding form $\hat{\bm L}_4$, respectively. Then, we have  the following equality:
			
			\begin{equation}
				\begin{aligned}
					\hat{\bm L}_4 = \hat{\vL}_2 \otimes \hat{\vL}_2
				\end{aligned}
			\end{equation}
			
			Moreover, the eigenvectors $\vv$ and $\hat \vv$ for the matrix $\hat{\vL}_2$ and unfolded matrix $\hat{\bm L}_4$ satisfy:
			\begin{equation}
				\begin{aligned}
					\hat{\bm v} = \vv \otimes \vv
				\end{aligned}
			\end{equation}
			\label{thm 2.2}
		\end{theorem}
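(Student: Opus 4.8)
\emph{Proof proposal.} The plan is to reduce both assertions to the identity $\hat{\bm T}_4 = \bm S_2 \otimes \bm S_2$ of Eq.~(\ref{eq:similarity_equvalence}) together with two elementary facts about the Kronecker product: its block structure (which controls row sums) and the mixed-product rule $(\bm A \otimes \bm B)(\bm C \otimes \bm D) = (\bm A\bm C)\otimes(\bm B\bm D)$. The symmetric normalization and the eigenvector relation then both follow by ``factoring through'' the tensor product, without touching the entrywise definitions again.

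First I would compute the degree matrix $\hat{\bm D}_4$ of $\hat{\bm T}_4$. Writing a row index of the unfolded matrix as $r = m(j-1)+i$ as in the fourth-order unfolding rule (Definition~2.2), the $r$-th row of $\bm S_2\otimes\bm S_2$ is the block row whose entry in position $(l,k)$ equals $\bm S_{2_{jl}}\bm S_{2_{ik}}$; summing over all columns gives $\big(\sum_l \bm S_{2_{jl}}\big)\big(\sum_k \bm S_{2_{ik}}\big) = \bm D_{2_{jj}}\bm D_{2_{ii}}$, which is precisely the $r$-th diagonal entry of $\bm D_2\otimes\bm D_2$. Hence $\hat{\bm D}_4 = \bm D_2 \otimes \bm D_2$, and in particular the factors appear in the same order on both sides of every identity below, with no spurious transposition. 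Since $\bm D_2$ is diagonal with positive entries (assuming every sample has nonzero degree), $\hat{\bm D}_4^{-1/2} = \bm D_2^{-1/2}\otimes\bm D_2^{-1/2}$.

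Next, applying the mixed-product rule twice yields
\[
\begin{aligned}
\hat{\bm L}_4 &= \hat{\bm D}_4^{-\frac12}\hat{\bm T}_4\hat{\bm D}_4^{-\frac12}\\
&= \big(\bm D_2^{-\frac12}\bm S_2\bm D_2^{-\frac12}\big)\otimes\big(\bm D_2^{-\frac12}\bm S_2\bm D_2^{-\frac12}\big)\\
&= \hat{\vL}_2 \otimes \hat{\vL}_2 ,
\end{aligned}
\]
which is the first claim. For the second claim, if $\hat{\vL}_2\vv = \lambda\vv$, then the same rule gives $(\hat{\vL}_2\otimes\hat{\vL}_2)(\vv\otimes\vv) = (\hat{\vL}_2\vv)\otimes(\hat{\vL}_2\vv) = \lambda^2(\vv\otimes\vv)$, so $\hat{\bm v}=\vv\otimes\vv$ is an eigenvector of $\hat{\bm L}_4$ with eigenvalue $\lambda^2$; and since the eigenvectors of a Kronecker product of two matrices are exactly the Kronecker products of their eigenvectors, this accounts for the full eigenstructure relevant to the embedding.

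I expect the only genuinely delicate point to be the index bookkeeping in the second step: one must verify that the index ordering built into the unfolding of Definition~2.2 coincides with the block ordering implicit in the Kronecker product, so that $\hat{\bm T}_4$, $\hat{\bm D}_4$ and $\hat{\bm L}_4$ are all indexed compatibly and Eq.~(\ref{eq:similarity_equvalence}) can be used verbatim. Everything past that is a routine application of the mixed-product property, together with the harmless standing assumption that $\bm D_2$ is invertible so that $\bm D_2^{-1/2}$ is well defined.
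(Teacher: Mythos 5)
Your proposal is correct: the identification $\hat{\bm D}_4=\bm D_2\otimes\bm D_2$ followed by two applications of the mixed-product rule is exactly the technique this paper uses for the analogous third-order result (Theorem~\ref{thm 3.2}), and the eigenvector claim follows the same way. Note that the paper itself does not reprove this fourth-order statement --- it imports it from~\cite{ref15} --- but your argument fills that gap in the same style, with the only caveat being the implicit standing assumption that every degree $\bm D_{2_{ii}}$ is strictly positive so that $\bm D_2^{-1/2}$ exists.
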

		
		One can reshape the eigenvector $\hat{\vv}$ into a matrix $\bm V\in \mathbb{R} ^{m\times m}$, and the matrix $\vV$  is an eigenmatrix of the  fourth-order tensor affinity  $ \hat{\bm T}_4$ in Eq.~(\ref{eq:similarity_equvalence}). The established equivalence between the decomposable tensor affinity $\mathcal{T}_4$ and any similarity matrix $\bm S$ opens a new path to understand high-order affinity. However, the fourth-order decomposable affinity inherits the drawback of the low-order similarity~\cite{fused_drawback} in that it is vulnerable to noise corruption and concentration effects when applied to high-dimensional samples. Alternatively, one can use undecomposable tensor affinity to provide complementary sample affinities that the pairwise matrix affinity could not provide.  The IPS2 uses a Fisher-ratio-like tensor affinity:
		\begin{equation}\label{eq:order-4Sim}
			\mathcal{T} _{ijkl}=exp(-\sigma \frac{d_{ij}+d_{kl}}{d_{ik}+d_{jl}+\varepsilon})
		\end{equation}
		for $i,j,k,l\in n$, where $d_{ij}$  denotes the distance between samples $x_i$  and $x_j$ and the parameter $\sigma$ is an scaling constant.
		
		One can calculate the normalized tensor affinity $\mathcal{L} _4$ from the fourth-order tensor affinity  $\mathcal{\bm T}_4$. Then, we can learn a low-dimensional embedding $\vV$ from the fourth-order tensor affinity by solving the following maximization problem:
		\begin{equation}
			\begin{aligned}
				\max_{\vV} &\sum_{j=1}^k \mathcal{L} _4\otimes _4\vv_j\otimes _3\vv_j\otimes _2\vv_j\otimes _1\vv_j\\
				&=tr((\vV * \vV)^T\hat{\bm L}_4(\vV * \vV))\\
				& s.t. \vV^T\vV = \bm I
			\end{aligned}
			\label{SC_Obj_L24}
		\end{equation}
		\section{Method}
		The performance of clustering based on pairwise affinity suffers in high-dimensional and small-sample-size datasets. Incorporating the spatial distribution among multiple samples instead of using only pairwise affinity improves the clustering performance. IPS2~\cite{ref15} built a bridge between the fourth-order affinity and low-dimensional embedding. It has been shown that high-order information can effectively complete the description of spatial data structure from low-order information. However, it is restricted to using even-order affinity, such as the fourth-order affinity, and does not apply to odd-order affinity. Moreover, it fails to fuse affinities from different orders within a uniform formulation, resulting in cumbersome utility for empirical applications.
		To address these two issues, we did the following work: (1) we introduced triadic and tetradic tensor affinity for three and four samples;
		(2) we propose a uniform framework to learn a low-dimensional embedding by fusing affinity of various orders. An illustrative overview of UTC is shown in Fig.~\ref{fig1_farmework}.

		\subsection{Characterizing Sample's Affinity via Third-Order Tensor Affinity}

		\subsubsection{Decomposable Third-Order Tensor Affinity}
		Popular clustering methods are based on the pairwise similarity of two samples. We are interested in the tensor affinity   among three samples, denoted by a third-order tensor $\mathcal T_3= [\cT_{ijk}]\in \mathbb{R} ^{m\times m \times m}$. A natural way to define such affinity is to use the composite affinity based on paired affinities, such that each entry $\cT_{ijk}$ is given by:
		\begin{equation}\label{eq:third_order_def}
			\begin{aligned}
				\cT_{3_{ijk}} = \bm S_{2_{ij}} \bm S_{2_{kj}}
			\end{aligned}
		\end{equation}
		Under this definition, one can generalize the pairwise affinity $\bm S_2$ to a third-order tensor affinity  $\mathcal T_3$. Specifically, the results satisfy the following properties.
		
		\begin{theorem}
			Given a decomposable third-order tensor affinity $\mathcal T_3$ defined in Eq.~(\ref{eq:third_order_def}), $\hat{\bm T}_3 = \bm S_2 * \bm S_2$, where  $\hat{\bm T}_3$ is the matrix unfolded  by the tensor $\cT_3$.
			\label{thm 3.1}
		\end{theorem}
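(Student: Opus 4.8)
The plan is to prove the equality by a direct, column-by-column comparison of the two $m^{2}\times m$ matrices, using the explicit form of the unfolding that stacks the frontal slices $\mathcal{T}_3^{(k)}=\mathcal{T}_3(:,:,k)$ on top of one another. First I would introduce notation for the columns of $\bm{S}_2$, writing $\bm{s}_1,\dots,\bm{s}_m$ so that $\bm{S}_{2_{ij}}=(\bm{s}_j)_i$, and restate the decomposability hypothesis as $\mathcal{T}_{3_{ijk}}=(\bm{s}_j)_i(\bm{s}_j)_k$. The observation that makes everything work is that the index repeated in the product $\bm{S}_{2_{ij}}\bm{S}_{2_{kj}}$ is the second one, $j$, and $j$ is exactly the column index of every frontal slice $\mathcal{T}_3^{(k)}$; hence, after stacking, $j$ remains the column index of $\hat{\bm{T}}_3$, matching the column index of the Khatri-Rao product $\bm{S}_2*\bm{S}_2$.

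Next I would compute the $j$-th column of $\hat{\bm{T}}_3$ blockwise. By the decomposability hypothesis the $j$-th column of the $k$-th frontal slice has $i$-th entry $(\bm{s}_j)_i(\bm{s}_j)_k$, so this column equals the scalar $(\bm{s}_j)_k$ times the vector $\bm{s}_j$. Stacking these $m$ blocks in the order prescribed by the unfolding yields, for the $j$-th column of $\hat{\bm{T}}_3$, the vector whose $k$-th block is $(\bm{s}_j)_k\,\bm{s}_j$; by the Kronecker-product definition specialized to a pair of vectors, this is precisely $\bm{s}_j\otimes\bm{s}_j$.

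Finally I would invoke the definition of the Khatri-Rao product, whose $j$-th column is by construction $(\bm{S}_2)_{:j}\otimes(\bm{S}_2)_{:j}=\bm{s}_j\otimes\bm{s}_j$. Since $\hat{\bm{T}}_3$ and $\bm{S}_2*\bm{S}_2$ have the same columns for every $j\in\{1,\dots,m\}$, they are equal, which is the claim. The proof is essentially bookkeeping; the only point that genuinely requires care — and the main obstacle — is to keep the three index conventions consistent, namely to verify that (i) the slice index $k$ used in the unfolding plays the role of the outer block index in the Kronecker convention of the earlier definition, and (ii) the repeated index $j$ of the decomposable affinity coincides with the common column index demanded by the Khatri-Rao product. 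Once these alignments are confirmed, the identity is immediate.
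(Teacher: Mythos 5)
Your proposal is correct and follows essentially the same route as the paper: both expand the unfolding (stacking of frontal slices), substitute the decomposability identity $\mathcal{T}_{3_{ijk}}=\bm{S}_{2_{ij}}\bm{S}_{2_{kj}}$, and recognize the $j$-th column of $\hat{\bm{T}}_3$ as $\bm{S}_{2_{:j}}\otimes\bm{S}_{2_{:j}}$, which is by definition the $j$-th column of $\bm{S}_2*\bm{S}_2$. Your column-by-column bookkeeping is just a tidier presentation of the paper's full matrix display; the index alignments you flag as the only delicate points are exactly the ones the paper's regrouping step relies on.
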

		
		\begin{proof}
			By the definition in  Eq.(~\ref{unfolding_order_3}), the unfolded matrix  $\hat{\bm T_3} \in \mathbb R^{n^2\times n}$ can be written:
			\begin{equation}
				\begin{aligned}
					\bm{\hat{T}}_3 &=\left[ \begin{matrix}
						\mathcal{T} _{3_{111}}&   \dots&    \mathcal{T} _{3_{1n1}}\\
						\vdots&   \ddots&   \vdots\\
						\mathcal{T} _{3_{n1n}}&   \dots&    \mathcal{T} _{3_{nnn}}\\
					\end{matrix} \right]
					\\
					&=\left[ \begin{matrix}
						\bm S_{2_{11}}\bm S_{2_{11}}&   \dots&    \bm S_{2_{1n}}\bm S_{2_{1n}}\\
						\vdots&   \ddots&   \vdots\\
						\bm S_{2_{11}}\bm S_{2_{n1}}&   \dots&    \bm S_{2_{1n}}\bm S_{2_{nn}}\\
						\vdots&   \ddots&   \vdots\\
						\bm S_{2_{n1}}\bm S_{2_{11}}&   \dots&    \bm S_{2_{nn}}\bm S_{2_{1n}}\\
						\vdots&   \ddots&   \vdots\\
						\bm S_{2_{n1}}\bm S_{2_{n1}}&   \dots&    \bm S_{2_{nn}}\bm S_{2_{nn}}\\
					\end{matrix} \right]
					\\
					&=\left[ \begin{matrix}
						\bm S_{2_{11}}\bm S_{2_{:1}}&   \dots&    \bm S_{2_{1n}}\bm S_{2_{:n}}\\
						\vdots&   \ddots&   \vdots\\
						\bm S_{2_{n1}}\bm S_{2_{:1}}&   \dots&    \bm S_{2_{nn}}\bm S_{2_{:n}}\\
					\end{matrix} \right]
					\\
					&=\left[ \begin{matrix}
						\bm S_{2_{:1}} \otimes \bm S_{2_{:1}}&  \dots&  \bm S_{2_{:n}}\otimes \bm S_{2_{:n}}\\
					\end{matrix} \right]
					\\
					&= \bm S_2 * \bm S_2
				\end{aligned}
			\end{equation}
		\end{proof}
		
		Surprisingly, this theorem shows that a decomposable third-order tensor affinity can be obtained via the Khatri-Rao product of two affinity matrices. 
		We now proceed to define a normalized affinity tensor $\cL_3$. Let a matrix $\hat {\bm L}_3$ be calculated by $\hat {\bm L}_3 = \hat {\bm D}_{3_1}^{-\frac{1}{2}} \hat {\bm T}_3  \hat {\bm D}_{3_2}^{-\frac{1}{2}}$. Here, the two diagonal matrices $\hat{\vD}_{3_1}, \hat{\vD}_{3_2}$ are given by $\hat {\bm D}_{3_1} =\sqrt{\sum_i{\hat{\bm T}_{3_{:i}}}\sum_i{\hat{\bm T}_{3_{:i}}}} =\bm D_2\otimes \bm D_2$ and $\hat {\bm D}_{3_2} = \sum_i{\hat{\bm T}_{3_{:i}}} =\bm D_2 \odot \bm D_2$. The normalized third-order affinity tensor $\cL_3$ is obtained by folding the matrix $\hat {\bm L}_3$.
		
		This defined affinity tensor  shares a similar property to that of the fourth-order affinity tensor.
		
		\begin{theorem}
			Let a matrix $\bm S_2$ be a pairwise affinity matrix with $\hat{\vL}_2$ being its normalized affinity matrix. The decomposable third-order tensor affinity  $\cT_3$  and its normalized tensor $\cL_3$ are obtained as above. We have
			\begin{equation}
				\begin{aligned}
					\hat{\bm{L}}_3 = \hat{\vL}_2 * \hat{\vL}_2
				\end{aligned}
			\end{equation}
			where $\hat{\bm L}_3$ is the unfolded matrix from the tensor $\cL_3$.
			
			\label{thm 3.2}
		\end{theorem}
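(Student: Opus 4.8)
The plan is to substitute the known factorizations of the building blocks of $\hat{\bm L}_3$ into its definition and collapse everything using the arithmetic of the Kronecker and Khatri-Rao products. Recall that Theorem~\ref{thm 3.1} already gives $\hat{\bm T}_3 = \bm S_2 * \bm S_2$, that the normalizer is $\hat{\bm L}_3 = \hat{\bm D}_{3_1}^{-1/2}\hat{\bm T}_3\hat{\bm D}_{3_2}^{-1/2}$, and that the two degree matrices factor as $\hat{\bm D}_{3_1} = \bm D_2 \otimes \bm D_2$ and $\hat{\bm D}_{3_2} = \bm D_2 \odot \bm D_2$. Since $\bm D_2$ is diagonal with positive entries, $\bm D_2 \odot \bm D_2 = \bm D_2^2$, so $\hat{\bm D}_{3_2}^{-1/2} = \bm D_2^{-1}$; and because taking a power of a diagonal matrix commutes with the Kronecker product, $\hat{\bm D}_{3_1}^{-1/2} = \bm D_2^{-1/2} \otimes \bm D_2^{-1/2}$. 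Thus the whole task reduces to simplifying $(\bm D_2^{-1/2}\otimes\bm D_2^{-1/2})(\bm S_2 * \bm S_2)\bm D_2^{-1}$.

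The two algebraic facts I would record first, both proved column by column, are: (i) for conformable $\bm A,\bm B,\bm C,\bm D$, one has $(\bm A\otimes\bm B)(\bm C * \bm D) = (\bm A\bm C) * (\bm B\bm D)$, which follows from the Kronecker mixed-product rule $(\bm A\otimes\bm B)(\bm c\otimes\bm d) = (\bm A\bm c)\otimes(\bm B\bm d)$ applied to the $k$-th column $\bm c_{:k}\otimes\bm d_{:k}$ of $\bm C * \bm D$; and (ii) for diagonal matrices $\bm F,\bm G$, one has $(\bm A * \bm B)(\bm F\bm G) = (\bm A\bm F) * (\bm B\bm G)$, since right multiplication by a diagonal matrix rescales the $k$-th column of $\bm A * \bm B$ by the scalar $f_{kk}g_{kk}$, and $f_{kk}g_{kk}(\bm a_{:k}\otimes\bm b_{:k}) = (f_{kk}\bm a_{:k})\otimes(g_{kk}\bm b_{:k})$, which is the $k$-th column of $(\bm A\bm F)*(\bm B\bm G)$.

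With these in hand the computation is a short chain. Using (i) with $\bm A = \bm B = \bm D_2^{-1/2}$ and $\bm C = \bm D = \bm S_2$ gives $(\bm D_2^{-1/2}\otimes\bm D_2^{-1/2})(\bm S_2 * \bm S_2) = (\bm D_2^{-1/2}\bm S_2) * (\bm D_2^{-1/2}\bm S_2)$. Then splitting $\bm D_2^{-1} = \bm D_2^{-1/2}\bm D_2^{-1/2}$ and applying (ii) with $\bm F = \bm G = \bm D_2^{-1/2}$ yields $\bigl[(\bm D_2^{-1/2}\bm S_2) * (\bm D_2^{-1/2}\bm S_2)\bigr]\bm D_2^{-1} = (\bm D_2^{-1/2}\bm S_2\bm D_2^{-1/2}) * (\bm D_2^{-1/2}\bm S_2\bm D_2^{-1/2})$, which equals $\hat{\vL}_2 * \hat{\vL}_2$ by the definition $\hat{\vL}_2 = \bm D_2^{-1/2}\bm S_2\bm D_2^{-1/2}$. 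Folding $\hat{\bm L}_3$ back into a tensor is exactly how $\cL_3$ was defined, so the claimed identity $\hat{\bm L}_3 = \hat{\vL}_2 * \hat{\vL}_2$ follows, mirroring Theorem~\ref{thm 2.2} for the fourth-order case.

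I expect the only genuine care point to be the bookkeeping in facts (i) and (ii): making sure the diagonal normalizer $\bm D_2^{-1}$ really does distribute as $\bm D_2^{-1/2}$ into each Khatri-Rao factor, which is precisely where the Hadamard-square form of $\hat{\bm D}_{3_2}$ (rather than a Kronecker-square form) is essential, and checking conformability in the column-wise Kronecker identities so that the slicing of $\hat{\bm T}_3$ used to derive $\hat{\bm D}_{3_1}$ and $\hat{\bm D}_{3_2}$ is consistent with the $n^2\times n$ shape of $\hat{\bm T}_3$. Everything else is routine substitution.
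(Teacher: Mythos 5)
Your proposal is correct and follows essentially the same route as the paper: substitute $\hat{\bm T}_3 = \bm S_2 * \bm S_2$, $\hat{\bm D}_{3_1} = \bm D_2\otimes\bm D_2$, $\hat{\bm D}_{3_2} = \bm D_2\odot\bm D_2$ into the definition of $\hat{\bm L}_3$ and collapse. The paper asserts the key collapse $(\bm D_2^{-1/2}\otimes\bm D_2^{-1/2})(\bm S_2*\bm S_2)(\bm D_2\odot\bm D_2)^{-1/2} = \hat{\vL}_2*\hat{\vL}_2$ in a single unexplained step, whereas your column-wise facts (i) and (ii) are exactly the justification that step needs, so your write-up is the same argument done more carefully.
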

		
		\begin{proof} By the definitions of the normalized affinity matrix $\hat{\vL}_2$ and $\hat{\bm L}_3$, we have:
			\begin{equation}
				\begin{aligned}
					\hat{\bm L}_3 =&\hat{\bm D}_{3_1}^{-\frac{1}{2}}\hat{\bm T_3}\hat{\bm D}_{3_2}^{-\frac{1}{2}}\\
					=&(\bm D_2\otimes \bm D_2)^{-\frac{1}{2}}(\bm S_2* \bm S_2)(\bm D_2\odot \bm D_2)^{-\frac{1}{2}}\\
					=&(\bm D_{2}^{-\frac{1}{2}}\bm S_2\bm D_{2}^{-\frac{1}{2}})*(\bm D_{2}^{-\frac{1}{2}}\bm S_2\bm D_{2}^{-\frac{1}{2}})\\ =&\hat{\vL}_2*\hat{\vL}_2\\
				\end{aligned}
			\end{equation}
		\end{proof}
		 Theo.~\ref{thm 3.1} and Theo.~\ref{thm 3.2} build a  connection between third-order affinity $ \cT_3 $  and pairwise similarity $\bm S_2$ via the Khatri-Rao product. We construct a synthetic dataset consisting of eighty samples, evenly drawn from two different clusters. Then, we visualize the pairwise affinity $\vS_2$ and the decomposable third-order and fourth-order tensor affinity in  Fig.~\ref{fig3_diagG}. One can observe that they preserve the sample proximity satisfactorily. They can be well segmented into four distinct blocks, and within each sub-block, the sample proximity is small.

		\begin{figure*}[]
			\centering
			\subfigure[]{
				\includegraphics[width=5.5cm]{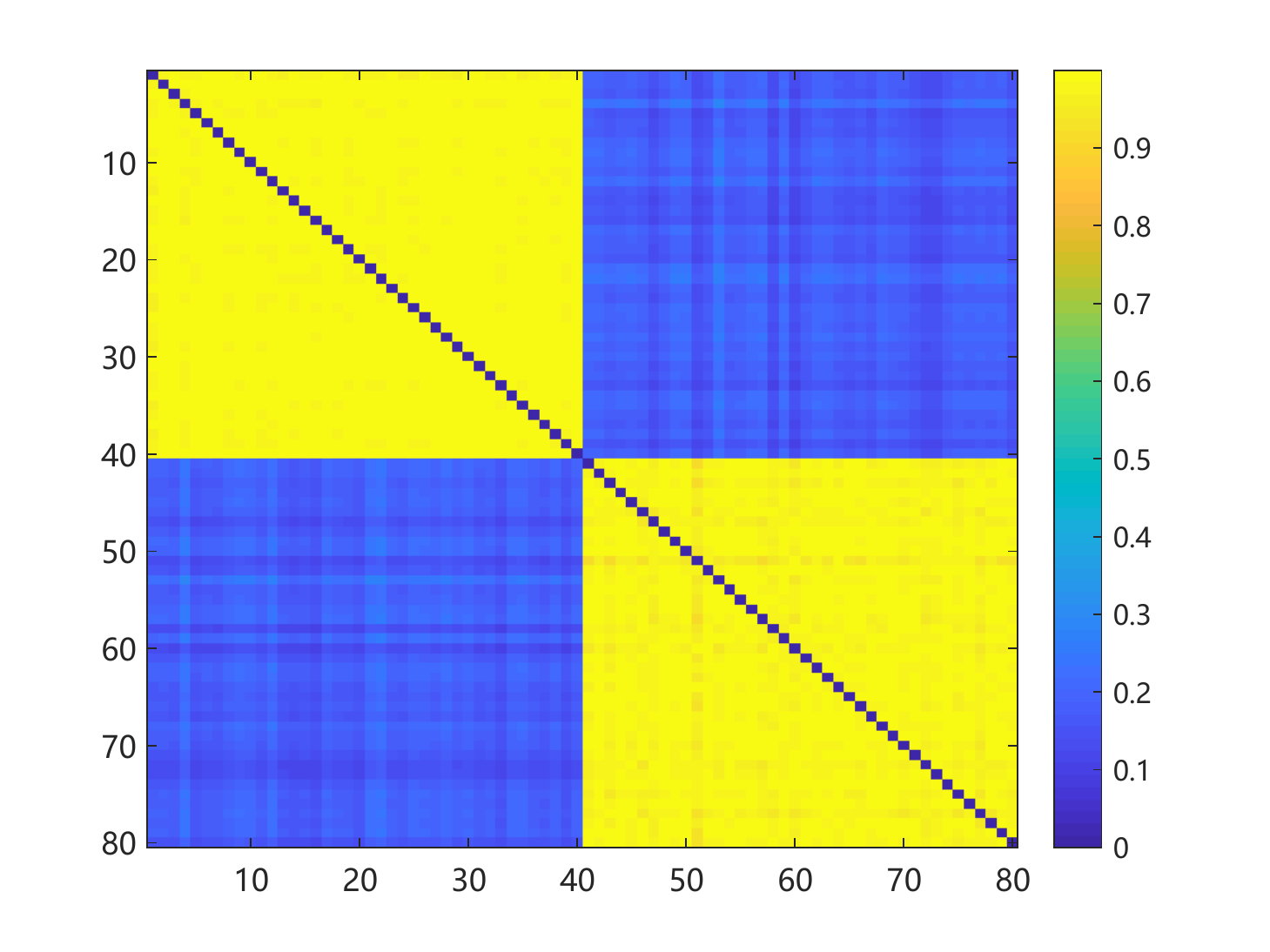}}
			\hspace{0in}
			\subfigure[]{
				\includegraphics[width=5.5cm]{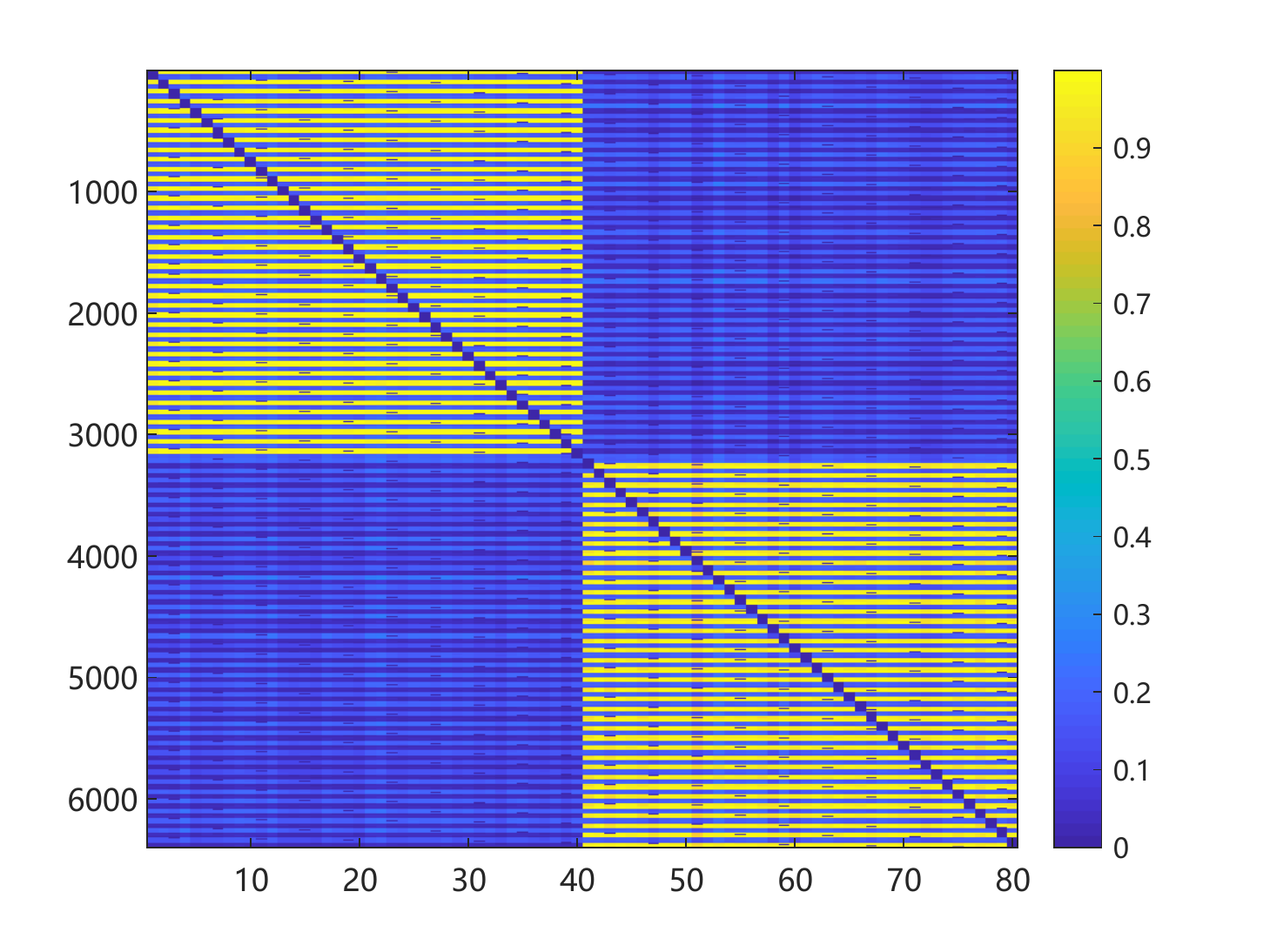}}
			\hspace{0in}
			\subfigure[]{
				\includegraphics[width=5.5cm]{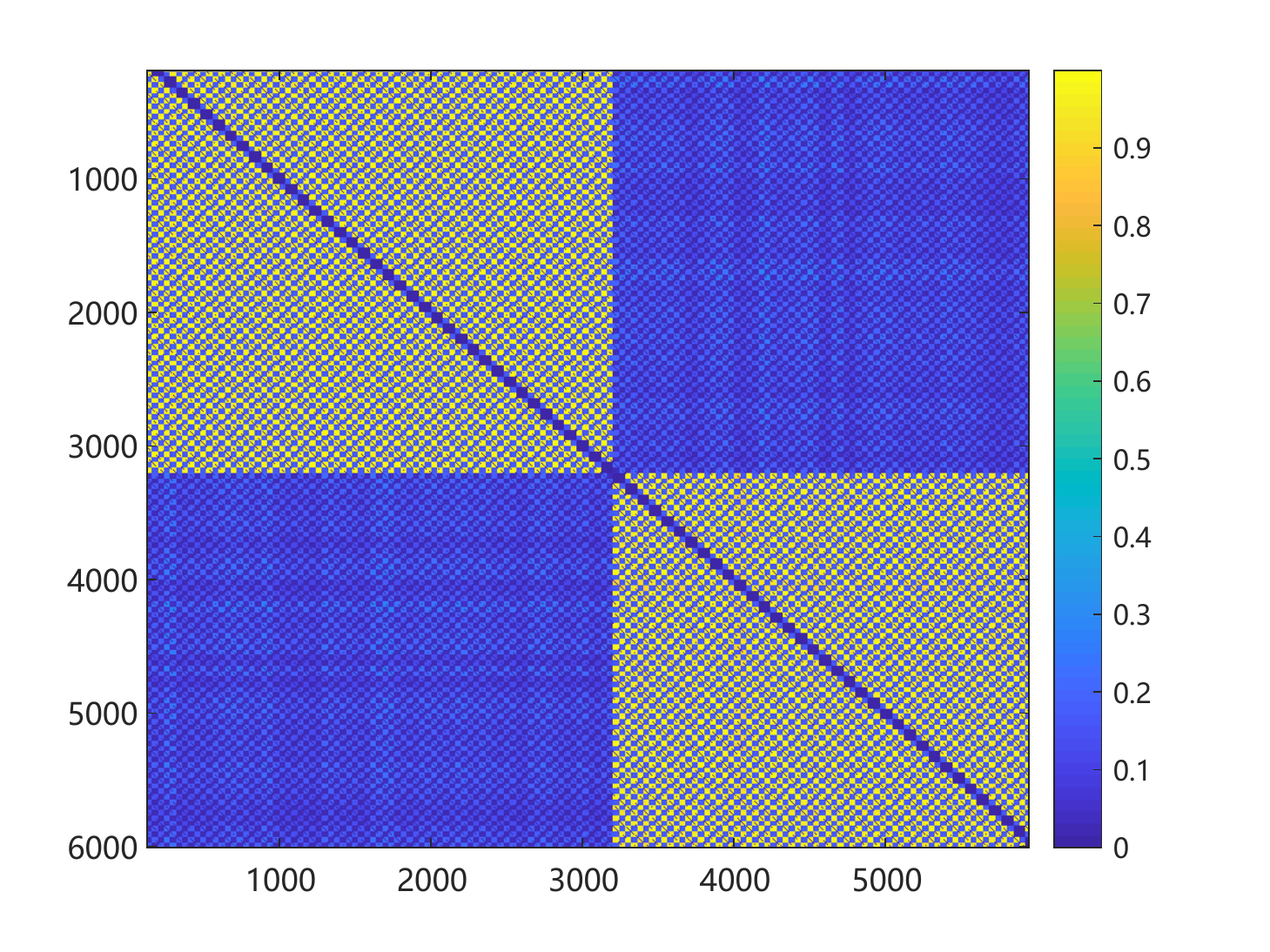}}
			\caption{ Demonstration of (a) pairwise,   (b) third-order, and  (c) fourth-order   affinity on a synthetic dataset drawn from a normal distribution. There are obvious block structures in the heatmap on different orders of unfolded affinity,   indicating that the decomposable similarities of different orders are consistent in terms of performance. }
			\label{fig3_diagG}
		\end{figure*}

		\subsubsection{Indecomposable Third-Order Tensor Affinity }
		
		After constructing the connection of decomposable affinity with pairwise similarity, we analyze the local spatial relations to obtain supplement information for pairwise relations. As ~\cite{low_embedding} stated, when the order is two, the spectral methods  can also be seen as an eigenvector $\vv$  containing the important underlying manifold information of the data.
		We generate this model to high-order affinity. Different third-order information can be extracted by examining the three samples from different views. As an example, the affinity between three samples is defined as follows: for any three samples $x_i$, $x_j$, and $x_k$, we can always use one sample $x_j$  as an anchor point for the other samples $x_i$ and $x_k$ and consider anchor point $x_j$'s perspective to measure the relationship between the two other points. If $x_i$ and $x_j$ are sufficiently similar, they are similar no matter the location of the anchor point. But for outliers, the similarity to other data should be small under  most anchors' perspectives. Therefore, the undecomposable affinity $\cT_3\in \mathbb{R}^{m^2\times m}$ among three points is defined by:

		\begin{equation}\label{eq:order-3Sim}
			\cT_{3_{ijk}}=\frac{<(x_i-x_j), (x_k-x_j)>}{d_{ij}d_{jk}}
		\end{equation}
		for $i,j,k\in m$, where $d_{ij}$ denotes the distance between samples $x_i$ and $x_j$.
		The entry $\mathcal T_{3_{ijk}}$ denotes the similarity between $x_i$ and $x_k$ under $x_j$'s view. If the distance between $x_i$ and $x_k$ is small, wherever the location of $x_j$ is, it should be a small value.
		Then, we can unfold $\mathcal T_3$ into a matrix $\hat{\bm T_3} \in \mathbb R^{m^2\times m}$ along the mode-3 direction.  Similar to the decomposable tensor affinity, we can infer the three-order normalized affinity tensor $\mathcal{L}_3 $, which when unfolded along the mode-3 direction satisfies $\hat{\bm L}_3=\hat{\bm D}_{3_1}^{-\frac{1}{2}}\hat{\bm T}_3\hat{\bm D}_{3_2}^{-\frac{1}{2}}$, where $\hat {\bm D}_{3_1}^{-\frac{1}{2}} =\sqrt{\sum_i{\hat{\bm T}_{3_{::i}}}\sum_i{\hat{\bm T}_{3_{::i}}}} =\bm D_2\otimes \bm D_2$ and $\hat {\bm D}_{3_2} = \sum_i{\hat{\bm T}_{3_{::i}}} =\bm D_2 * \bm D_2$.
		Similar to the form of the pairwise and tetradic relations, we seek a low-dimensional embedding $\vv$ from the third-order tensor affinity by maximizing the cross-entropy:
		
		\begin{equation}
			\begin{aligned} \max_{\vV} &\sum_{j=1}^k \mathcal{L}_3\otimes _3\vv_j\otimes _2\vv_j\otimes _1\vv_j\\
				=\max_{\vV} ~&  tr((\vV*\vV)^T\hat{\bm L}_3\vV)\\
				& s.t. \vV^T\vV = \bm I
			\end{aligned}
			\label{SC_Obj_L23}
		\end{equation}

		\subsection{Uniform Tensor Clustering}
		
		We now formulate a uniform framework to learn a low-dimensional embedding from  affinities of various orders.
		Suppose we have $n$ samples $\vX=[\vx_1,\vx_2,...,\vx_m] \in \mathbb{R}^{n\times m}$. The aim of clustering is to assign the samples  into $k$ disjoint clusters, $C_1,C_2,\dots,C_k$.  We introduce an entropy, called total normalized similarity, to quantify the sample assignment:
		\begin{definition}
			\textbf{The total N-similarity of a cluster:} Let $\bm C \in \bm X$ be a group of samples and $\mathcal S \in \mathbb R ^{n_1 \times n_2 \dots \times n_k}$ be the order-$k$ similarity tensor. The total normalized similarity of $\bm C$ is defined as
			\begin{equation}
				\begin{aligned}
					Sim(\bm C) = \sum_{x_{i_1},x_{i_2},\dots,x_{i_k}\in \bm C} \mathcal L_{x_{i_1},x_{i_2},\dots,x_{i_k}}
				\end{aligned}
			\end{equation}
			where $\mathcal L$ is the normalized affinity tensor calculated by $\mathcal S$. In addition, for a partition $\bm C_1, \bm C_2, \dots, \bm C_k$ of $\bm X$, we define the normalized associativity of the clustering as:
			\begin{equation}
				\begin{aligned}
					N-Assoc(\bm C_1, \bm C_2, \dots, \bm C_k) = \sum ^{k}_{i=1}\frac{Sim(\bm C_i)}{|\bm C_j|^m},
				\end{aligned}
				\label{Assoc}
			\end{equation}
			where $|\bm C_j|$ denotes the sample number of cluster $\bm C_j$.
		\end{definition}
		
		Let an indicator matrix $\mH =[\vh_1,\vh_2,\dots,\vh_k]\in \mathbb R ^{n\times k}$ be the sample assignment  such that $\bm \mH_{ij} = |\bm C_j|^{-1}$ if $x_i\in \bm C_j$ and zero otherwise.
		
		One can  maximize the normalized associativity in Eq.~(\ref{Assoc}) to seek an optimal sample assignment ${\bm C_i,\dots,\bm C_k}$. Via algebraic transformation, this problem is equivalent to solving:
		
		\begin{equation}
			\begin{aligned}
				\max_{\bm C_1,\cdots,\bm C_k} \sum_{j=1}^k(\mathcal L \otimes_m \vh_j \otimes_{m-1} \vh_j \dots \otimes_1 \vh_j)
			\end{aligned}
			\label{}
		\end{equation}
		where $\vh_j$ denotes the $j^{th}$ column of $\bm H$.
		
		One can verify that solving the maximum normalized associativity is NP-hard. Alternatively,  one can  relax the binary assignment matrix to the orthonormal matrix $\vV\in \mathbb R^{n\times k}: \vV^T\vV = \bm I$ to lower the expectation of binary assignment. The simplified  problem then becomes solving
		
		\begin{equation}
			\begin{aligned}
				\max_{\vV^T\vV = \bm I} \sum_{j=1}^k \mathcal L \otimes_1 \vv_j \otimes_2 \vv_j \dots \otimes_m \vv_j
			\end{aligned}
		\end{equation}
		where $\vv_j$ represents the $j^{th}$ column of $\vV$.
		
		We now fuse the aforementioned undecomposable third-order and fourth-order tensor affinities with the pairwise affinity matrix to seek a uniform low-dimensional embedding. The high-order tensor affinity can be used to effectively address the challenge of the vulnerability to high-dimensional datasets. The proposed model, named uniform tensor clustering (UTC),  effectively fuses the similarity measurements among  samples to yield a uniform low-dimensional embedding to approximate  the partition matrix. It achieves the goal by maximizing the cluster's associativity of various orders. The proposed model is formulated as:
		\begin{equation}
			\begin{aligned}
				\min_{\vV} \,\,& \sum_{j=1}^k -{\hat L}_2\otimes _2\vv_j\otimes _1\vv_j-\mathcal{L} _3\otimes _3\vv_j\otimes _2\vv_j\otimes _1 \vv_j \\ &-\mathcal{L} _4\otimes _4\vv_j\otimes _3\vv_j\otimes _2\vv_j\otimes _1\vv_j \\
				&s.t\,\,\vV^T\vV =\bm I
			\end{aligned}
			\label{SC_Obj_L234}
		\end{equation}
		where  $\vv_j$ is the $j^{th}$ column of the matrix $\vV$.
		This unified model aims to fuse various affinity matrices to yield a uniform low-dimensional embedding that is robust to noise contamination and high-dimensional concentration effects. The proposed model  enhances the clustering performance by exploiting spatial information from multiple (more than two) samples. Upon solving this problem, one obtains the embedding $\vV$. The clustering task is accomplished by  fusing low- to high-order information and using K-means clustering based on the embedding matrix $\vV$ to obtain the final group. Therefore, once we have the indecomposable tensor affinity $\mathcal S$ from different orders, computing the fused low-dimensional embedding yields the different orders' spatial information to enhance the clustering performance.
		
		\subsection{Numerical Scheme to Solve UTC}
		The optimization problem in Eq.~(\ref{SC_Obj_L234}) is convex, and the objective is differentiable. However, the gradient of the objective function involves solving a polynomial function of order three. It has no explicit solution and thus is computationally intractable. To circumvent this issue, we introduce a slack variable $\vv_2$ to approximate the term  $\vv\otimes\vv$, thereby avoiding having to solve a high-order polynomial function. The model can be rewritten as
		\begin{equation}
			\begin{aligned}
				\min_{\vV_1 , \vV_2}\,\, & -tr(\vV_1^T\hat {\bm L}_2\vV_1 + \vV_2^T \hat{\bm L}_3\vV_1+\vV_2^T\hat{\bm L}_4 \vV_2)\\
				s.t.\,\,& \vV_1^T\vV_1=\bm I \\
				& \vV_1 * \vV_1 = \vV_2
			\end{aligned}
			\label{ALF_form}
		\end{equation}
		
		This problem can be solved by alternating direction minimization. Its augmented Lagrangian formulation is  as follows:
		\begin{equation}
			\begin{aligned}
				\min_{\vV_1,\vV_2}\,\,& -tr(\vV_1^T\hat{\vL}_2\vV_1) - tr(\vV_2^T \hat{\bm L}_3\vV_1) - tr(\vV_2^T \hat{\bm L}_4 \vV_2) \\
				+& <\bm Y_1 ,\vV_1 * \vV_1 - \vV_2> + <\bm Y_2, \vV_1^T\vV_1 - \bm I>\\
				+& \frac{\mu}{2 } [\|\vV_1 *  \vV_1 - \vV_2\|_F^2 + \|\vV_1^T\vV_1- \bm I\|_F^2]
			\end{aligned}
			\label{ALF_form_2}
		\end{equation}
		where $\bm Y_1$ and $\bm Y_2$ are Lagrange multipliers. The constant $ \mu > 0 $ is a penalty parameter. We then decompose the problem into two subproblems  with respect to the  variables $\vV_1$ and  $\vV_2$ alternatively. Each  variable is solved while fixing other variables. The process is updated iteratively until convergence.
		
		\textbf{Step 1): Solving the subproblem with respect to the variable $\vV_1$}
		
		By fixing the variable $\vV_2$, the  problem can be  simplified as:
		\begin{equation}\label{eq:subproblem1}
			\begin{aligned}
				\min_{\vV_1} \,\, &-tr(\vV_1^T\bm L\vV_1) - tr(\vV_2^T\hat{\bm L}_3\vV_1)\\
				+ &< \bm Y_1, \vV_1 * \vV_1 - \vV_2> + <\bm Y_2, \vV^T\vV- \bm I>\\
				+ &\frac{\mu}{2}[\|\vV_1 *  \vV_1 -\vV_2\|_F^2 + (\vV^T\vV- \bm I)]
			\end{aligned}
		\end{equation}
		
		The gradient of the quadratic  function is:
		\begin{equation}\label{eq:gradient1}
			\begin{aligned}
				\frac{\partial L}{\partial \vV_1} = & - 2 \bm L \vV_1 - \hat{\bm L}_3^T\vV_2 + \mu\sum_{j=1}^k (\vV_{1_{:j}} \otimes \bm I\\
				&+\bm I\otimes \vV_{1_{:j}})^T(\vV_{1_{:j}} * \vV_{1_{:j}}-\vV_{2_{:j}}+\bm Y_{1_{:j}} /\mu)\\
				&+ 2\mu\vV_1(\vV_1^T\vV_1-\bm I+\bm Y_2/\mu)
			\end{aligned}
		\end{equation}
		
		Thus, the problem in Eq.~(\ref{eq:subproblem1}) can be solved via gradient descent or by setting the gradient function Eq.~(\ref{eq:gradient1}) to be zero to obtain the solution directly.

		\textbf{Step 2): Solving the subproblem with respect to the variable $\vV_2$}
		
		By fixing the variable  $\vV_1$, the augmented Lagrange function can be simplified as:
		\begin{equation}
			\begin{aligned}
				\min_{\vV_2} \,\, &tr(-\vV_2^T\hat{\bm L}_3 \vV_1 ) - tr(\vV_2^T\hat{\bm L}_4\vV_2 )\\
				&+\bm Y_1(\vV_1 * \vV_1 - \vV_2)+\frac{\mu}{2}[\|\vV_1 * \vV_1 -\vV_2\|_F^2
			\end{aligned}
		\end{equation}
		The gradient of the objective function is:
		\begin{equation}\label{eq:gradient2}
			\begin{aligned}
				\frac{\partial L}{\partial \vV_2} = -\hat{\bm L}_3 \vV_1-2\hat{\bm L}_4\vV_2 +\mu(\vV_2-\vV_1 * \vV_1-\frac{\bm Y_1}{\mu})
			\end{aligned}
		\end{equation}
		By setting the gradient to zero, one can obtain the implicit solution as:
		\begin{equation}\label{eq:ClosedFromV2}
			\begin{aligned}
				\vV_2^{*} = (\mu\bm I-2\hat{\bm L}_4)^{-1}(\mu \vV_1 * \vV_1 + \hat{\bm L}_3\vV_1 + \bm Y)
			\end{aligned}
		\end{equation}
		
		\textbf{Step 3): Updating the multipliers $\bm Y_1$ and $\bm Y_2$:}
		
		\begin{equation}\label{eq:updateY1}
			\begin{aligned}
				\bm Y_1^{(k+1)} = \bm Y_1^{(k)} + \mu (\vV_1^{(k)} * \vV_1^{(k)} -\vV_2^{(k)})
			\end{aligned}
		\end{equation}
		\begin{equation}\label{eq:uqdataY2}
			\begin{aligned}
				\bm Y_2^{k+1} = \bm Y_2^{(k)} +\mu ({\vV_1^{(k)}}^T\vV^{(k)}-\bm I)
			\end{aligned}
		\end{equation}
		The three steps are  iteratively updated until convergence or until a stopping criterion is met: $max(\|\vv_1^{k+1}-\vv_1^k\|_\infty, \|\vv_2^{k+1}-\vv_2^k\|_\infty,\|\vv_1^{k+1}\otimes \vv_1^{k+1} - \vv_2^{k+1}\|_\infty)<\epsilon$. Since the main problem is convex, the numerical scheme is guaranteed to be convergent.
		The  alternate strategy to solve the problem is summarized in Algorithm \ref{ALg_1}.
		
		\begin{algorithm}[t]
			\caption{High-Order Affinity Clustering}
			\hspace*{0.02in} {\bf Input:}
			Data $\mathcal X\in \mathbb R^{m*d}$, Cluster number $c$\\
			\hspace*{0.02in} {\bf Output:}
			The fused low-dimensional embedding $\vV$ and the cluster results $\bm Y_{pred}$
			\begin{algorithmic}[1]
				\State Compute the pairwise affinity $\hat{\vL}_2$. Set $\vV_1^0 = \vV_2^0 = \bm Y_1^0 = \bm Y_2^0 = \bm 0$, $\mu^0 = 10^{-3}$, $\mu^{max} = 10^2$, $\epsilon = 10^{-2}$, $\varepsilon = 10^{-3}$, $\rho = 1.1$, $\alpha = 10^{-3}$, $k=t=0$, and $\sigma = 10^{-6}$.
				\State Construct triadic affinity tensor $\mathcal L_3$ by Eq.~(\ref{eq:order-3Sim}).
				\State Construct tetradic affinity tensor $\mathcal L_4$ by Eq.~(\ref{eq:order-4Sim}).
				
				\State Unfold order-3 tensor $\mathcal L_3$ to matrix $\hat{\bm L_3} $ by Eq.~(\ref{unfolding_order_3}).
				\State Unfold order-4 tensor $\mathcal L_4$ to matrix $\hat{\bm L_4} $ by Eq.~(\ref{unfolding_order_4}).
				\While{Not Converged}
				\State $\vV_{1_{temp}}^t = \vV_1^k$.
				
				\While{Not Converged}
				\State Fix $\vV_2^k$, computed $\frac{\partial L}{\partial\vV_{1_{temp}}^t}$ by Eq.~(\ref{eq:gradient1}).
				\State Update $\vV_{1_{temp}}^{t+1}$ by $\vV_{1_{temp}}^{t+1} = \vV_{1_{temp}}^{t} + \alpha \frac{\partial L}{\partial\vV_{1_{temp}}^{t}}$.
				\State Check the convergence conditions:
				
				~~~$\|\vV_{1_{temp}}^{t+1} - \vV_{1_{temp}}^{t}\|_\infty \le \epsilon $ .
				\State $t= t+1$.
				\EndWhile
				\State $\vV_1^{k+1}  =\vV_{1_{temp}}^{t} $
				\State Fix $\vV_1^{k+1}$, update $\vV_2^{k+1}$ by Eq.~(\ref{eq:ClosedFromV2}).
				\State Update $\bm Y_1^{k+1}$ by Eq.~(\ref{eq:updateY1}).
				\State Update $\bm Y_2^{k+1}$ by Eq.~(\ref{eq:uqdataY2}).
				\State $\mu^{k+1} = min(\rho\mu^{k}, \mu^{max})$
				\State Check the convergence conditions:
				
				$\|\vV_1^{k+1} - \vV_1^k\|_\infty \le \varepsilon$, $\|\vV_2^{k+1} - \vV_2^k\|_\infty \le \varepsilon$
				
				$\|\bm Y_1^{k+1} - \bm Y_1^k\|_\infty \le \varepsilon$,$\|\bm Y_2^{k+1} - \bm Y_2^k\|_\infty \le \varepsilon$
				\State $k = k+1$.
				\State $t = 0$
				\EndWhile
				
				\State \Return $\vV_1^k$
				\State Perform spectral clustering on $\vV\vV^T$ to obtain $\bm Y_{pred}$
			\end{algorithmic}
			\label{ALg_1}
		\end{algorithm}
		
		\section{EXPERIMENTS}
		In this section, we illustrate the superiority of the proposed UTC method on both synthetic and real datasets. We first construct an example (Syndata-1) to demonstrate the scenario in which clustering through the standard pairwise affinity fails. In comparison, the triadic affinity can be employed to adequately characterize the  spatial distribution of multiple samples; thus, the proposed UTC can perfectly cluster the sample using the triadic affinity. We then construct a second synthetic dataset (Syndata-2) with the HDLSS property to demonstrate the scenario in which  clustering through standard pairwise affinity is unsatisfactory when the feature dimension increases. In comparison,  the proposed UTC can perform clustering tasks stably by integrating affinities of different orders.   Finally, we conduct experiments on six real datasets to verify the effectiveness of the proposed UTC by comparison with five popular clustering methods.
		
		\subsection{Experimental Settings}
		We employ five popular clustering methods to benchmark the proposed UTC. A brief introduction to these methods is given below:
		\begin{itemize}
			\item{\textbf{Spectral clustering (SC)}~\cite{Ncut}}: The classic spectral clustering gives a baseline on behalf of pairwise similarity.
			
			\item{\textbf{Clustering with Adaptive Neighbors (CAN)} ~\cite{CAN}}: Dynamically learning the affinity matrix by assigning the adaptive neighbors for each data point based on local distance. The final similarity matrix's connected components are made precisely equal to the cluster number by imposing a new rank constraint.
			
			\item{\textbf{Clique averaging+ncut (CAVE)} ~\cite{CAGE}}: The hypergraph, on behalf of the high-order affinity, is approximated using clique averaging, and the resulting graph is partitioned using the normalized cuts algorithm.
			
			\item{\textbf{Pair-to-pair clustering (PPC)}~\cite{ref15}}: Tetradic undecomposable affinities are used to obtain the low-dimension embedding as the final similarity matrix of the spectral method.
			
			\item{\textbf{Integrating tensor similarity and pairwise similarity (IPS2)}~\cite{ref15}}: Applying the spectral clustering method on the fused similarity that combines the pair-to-pair affinities and pairwise similarity.
			
			\item{\textbf{Uniform Tensor Clustering (UTC)}}: The proposed method.
		\end{itemize}
		In terms of the type of the affinity, the five methods can be divided into three categories, i.e., pairwise affinity, high-order affinity, and fused-order affinity.
		
		The performance of each method is evaluated with respect to five popular metrics:  accuracy (ACC), adjusted Rand index (ARI), F-score, normalized mutual information (NMI), and purity (PURITY).   The larger the value is, the better the performance is.

		\begin{figure*}[]
			\centering
			\subfigure[]{
				\includegraphics[width=5cm]{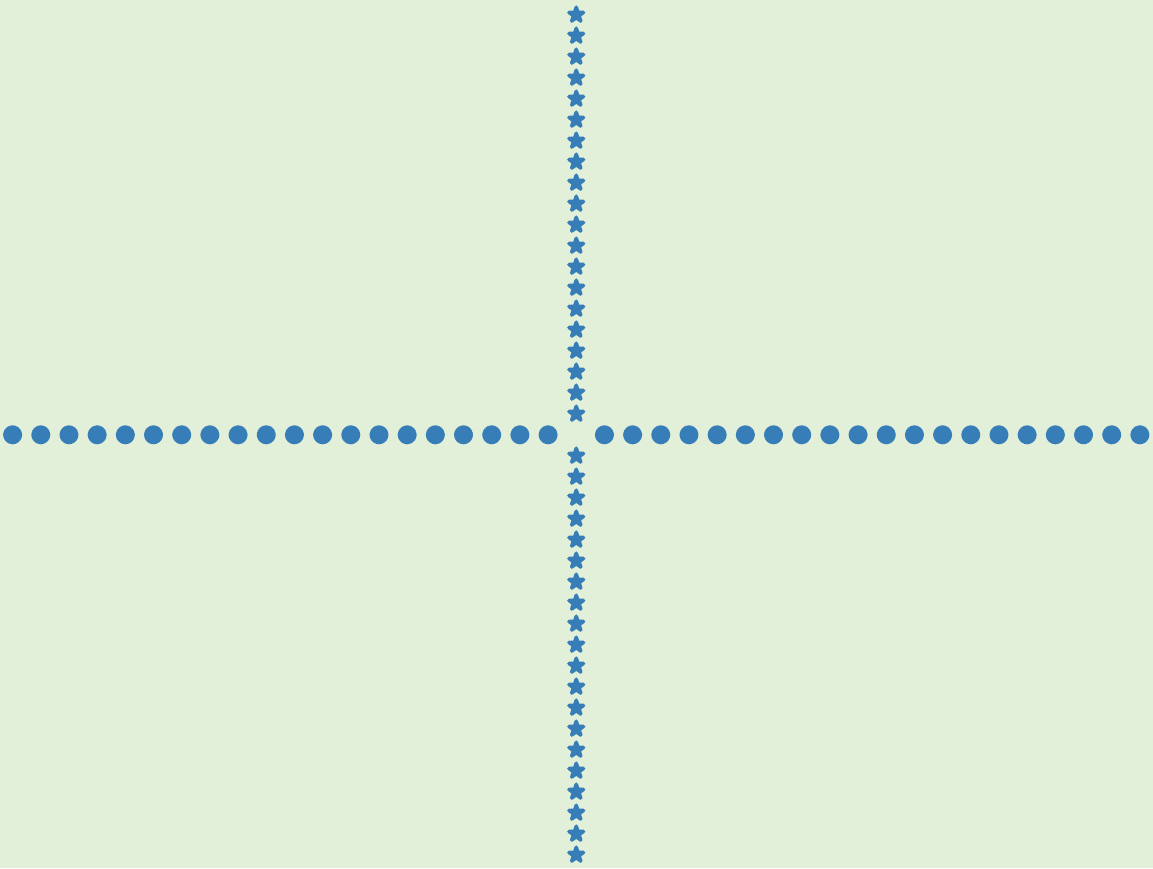}
			}
			\subfigure[]{
				\includegraphics[width=5cm]{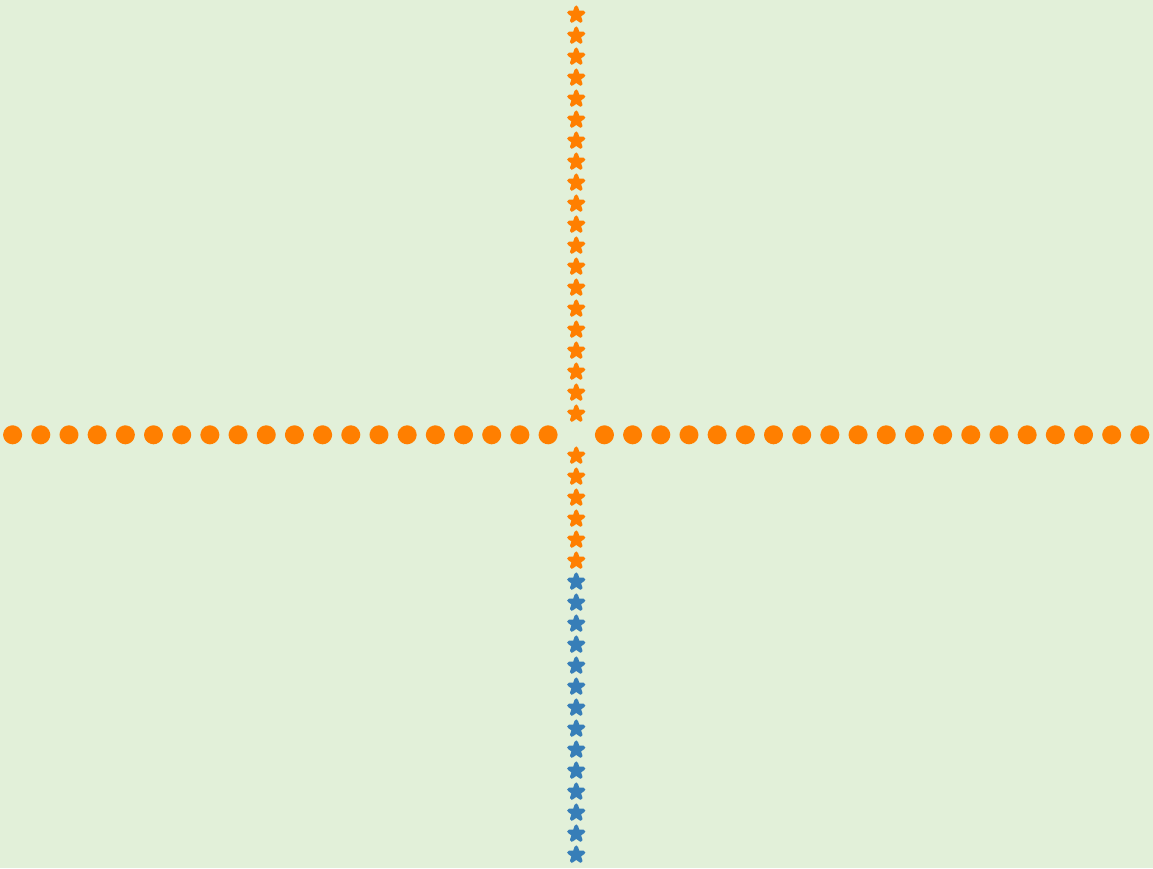}
			}
			\subfigure[]{
				\includegraphics[width=5cm]{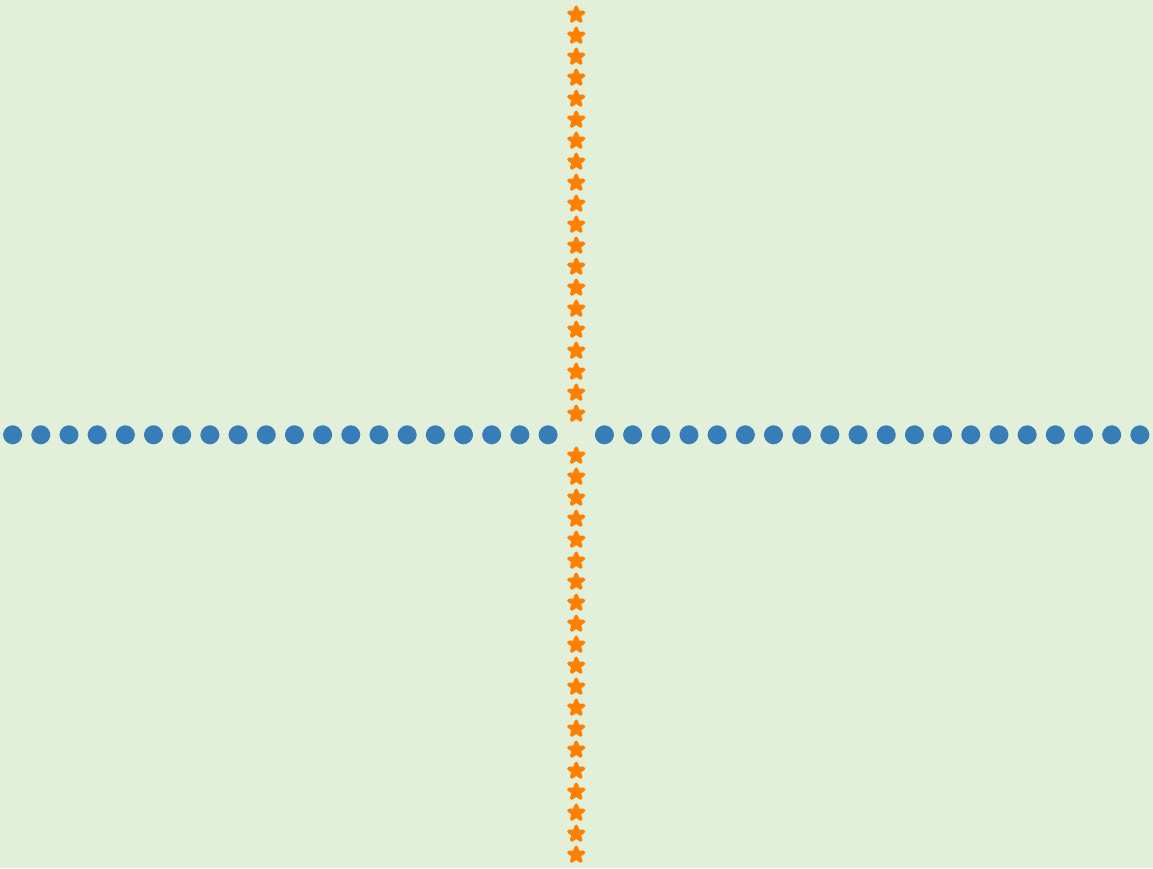}
			}
			\caption{Experiment on Syndata-1. (a)  Syndata-1 is constructed to consist of two subgroups, which are perfectly distributed along two cross-lines, highlighted by $\cdot$ and $*$; (b) The assignments obtained using pairwise affinity mistakenly mix the two groups, highlighted in orange   and blue. Most of the samples are labeled incorrectly; (c) The assignments obtained by UTC using  triadic affinity perfectly segment  the samples into the correct subgroups, highlighted in orange  and blue. }
			\label{fig2_syndata1}
		\end{figure*}
		
		\subsection{Experiment on Synthdata-1 to Validate the Discrimination Potential of UTC via Triadic Affinity}
		\textbf{Syndata-1} consists of 40 samples from two different subgroups. Each group is perfectly distributed along a straight line, while the two groups are distributed perpendicularly to each other. We use circles and crosses to denote the samples from the two subgroups, and their distribution is shown in Fig. ~\ref{fig2_syndata1}(a).
		
		We applied the  spectral clustering, which works on pairwise affinity, on  \textbf{Syndata-1}. Since the samples from the two subgroups are perpendicular, the resulting affinity matrix is non-differentiable. Therefore, the SC fails to yield the correct assignment. It mistakenly classifies the samples into two parts, with most in the upper and the rest in the bottom. In comparison, UTC can identify the sample assignments perfectly with $100\%$ accuracy. The results indicate that the proposed undecomposable high-order affinity can explore complementary information to improve the clustering performance.

		\begin{figure*}[]
			\centering
			\includegraphics[width=17cm]{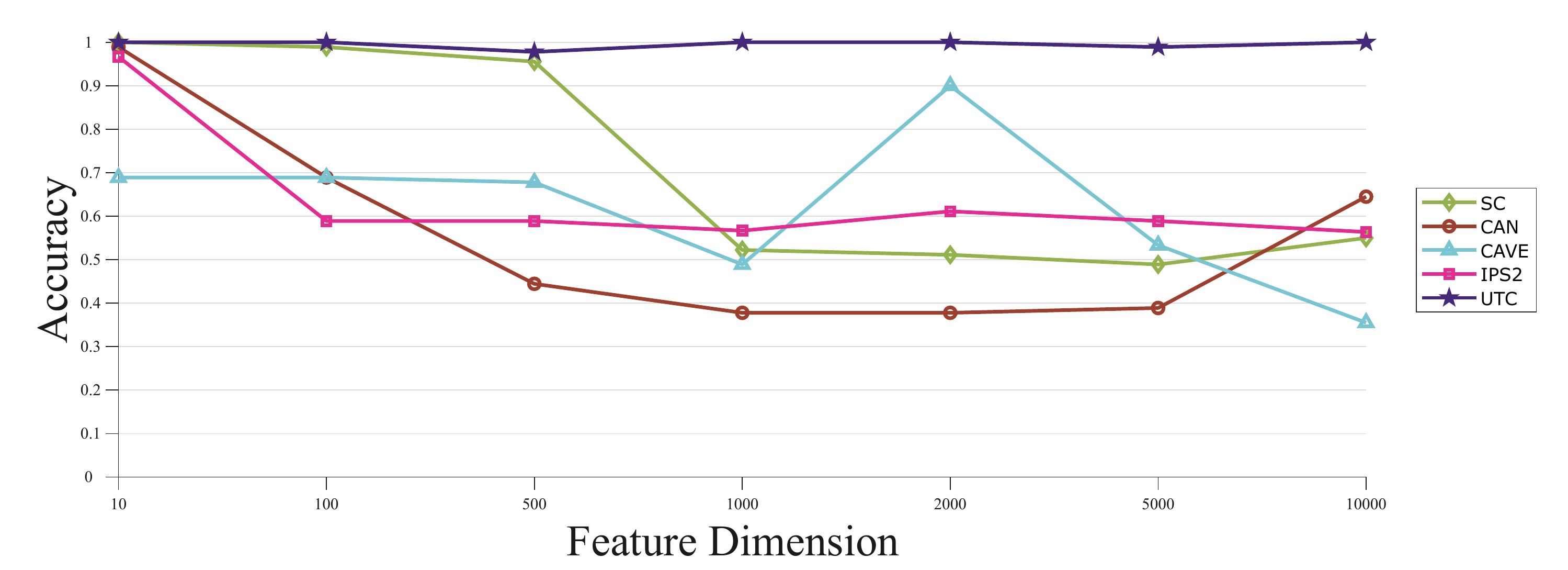}
			\caption{Experiment to demonstrate the stability of UTC when dealing with high-dimensional feature corruption. The feature dimension is artificially increased from 10 to 10000. The accuracy of most of the comparative methods gradually decreased from 100\% to around 35\%. In comparison, the performance of UTC was stable; the accuracy was maintained at approximately 100\% as the dimension increased.}
			\label{syn2_acc_bar}
		\end{figure*}

		\begin{table*}[]
			\caption{Clustering results on Syndata-2 when the dimensionality is 10000}\label{tab:syndata2}
			\centering
			\renewcommand\tabcolsep{10.0pt}
			\begin{tabular}{cclllll}
				\hline
				\multicolumn{1}{l}{Datasets} & Methods                & \multicolumn{1}{l}{ACC}&\multicolumn{1}{l}{ARI}&\multicolumn{1}{l}{F-SCORE}&\multicolumn{1}{l}{NMI}&\multicolumn{1}{l}{PURITY}\\ \hline \hline
				\multirow{8}{*}{Syndata2}  & SC                  & 0.55   & 0.4427 & 0.6633 & 0.6544 & 0.68  \\
				& CAN                    & 0.6449 & 0.2306 & 0.5342 & 0.3323 & 0.6449  \\
				& CAVE                   & 0.3551 & -0.013 & 0.3734 & 0.0018 & 0.3551\\
				& PPS                    & 0.5636  & 0.3166  & 0.4472 & 0.4453 & 0.5636\\
				& IPS2                   & 0.7218  & 0.4847  & 0.5848 & 0.5862 & 0.7818\\
				& UTC(Fusing pairwise and triadic affinity)       & 0.86    & 0.6719  & 0.7865 & 0.759  & 0.86\\
				& UTC(Fusing pairwise and tetradic affinity)      & 0.77    & 0.5204  & 0.6799 & 0.5776 & 0.77\\
				& UTC    & $\bm{1}$ &$\bm{1}$ &$\bm{1}$ &$\bm{1}$ &$\bm{1}$\\ \hline
				\hline
			\end{tabular}
		\end{table*}

		\begin{figure*}[]
			\centering
			\includegraphics[width=17cm]{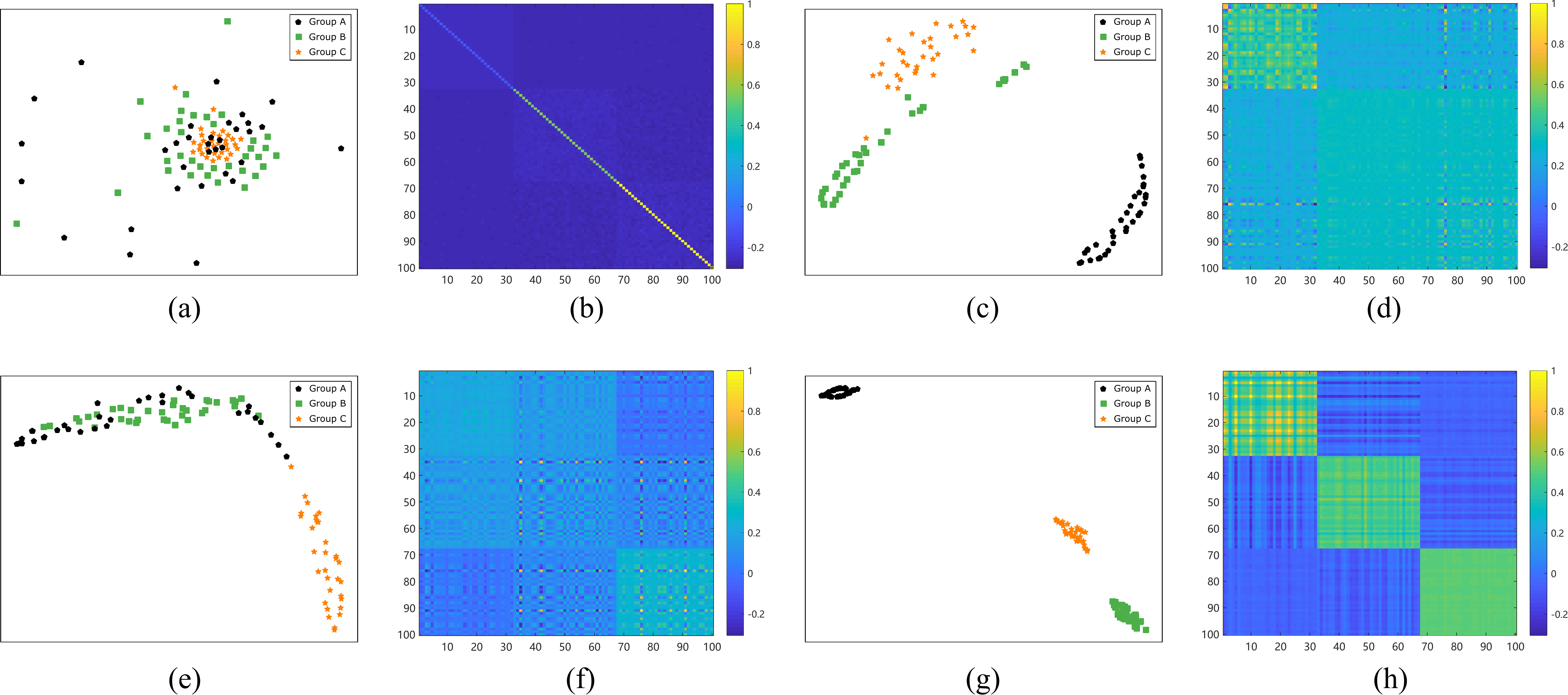}
			\caption{Visualization results on Syndata2 with a feature dimensionality of 10000. We apply t-SNE on the embedding features derived from UTC and the corresponding affinity matrix in three ways. Clearly, most of the samples are mixed together when applying t-SNE to the raw data(a), and the corresponding affinity matrix has no clear-cut structure (b). The samples in group A are well-segmented, while the other groups remain undistinguished when fusing the triadic affinity with pairwise affinity (c). The upper left of the corresponding affinity matrix has blocky structures (d). Although group C can be identified accurately by combining fourth-order association with pairwise similarity, the remaining two types of the Gaussian mixture distribution are merged (e),(f). By combining the third-order and fourth-order affinities with pairwise similarity, each group can be identified accurately (h), and three distinct blocks are presented on the diagonal of the corresponding affinity matrix (g).}
			\label{vis_syn2}
		\end{figure*}

		\subsection{Experiment  on Synthdata-2 to Validate the Stability of UTC over Dimension Expanding via High-order Affinities}
		
		\textbf{Syndata-2} consists of three subgroups, each derived from independent and identically distributed normal distributions with an equal standard deviation of 0.5 and mean of 2. The data from different groups are orthogonal, and each group contains 30-40 samples.

		The pairwise affinity tends to be ambiguous with expanding feature dimensionality, limiting the clustering performance. We applied the proposed UTC on \textbf{Syndata-2} to validate its stability in handling high-dimensional data by comparison with popular methods, including SC, CAN and CAVE, PPS, and IPS2. The results are shown in Fig.~\ref{syn2_acc_bar}. When the feature dimension varies from 10 to 10000, UTC can maintain robustness to perform accurate clustering. Furthermore, the baseline methods gradually lose the ability to identify the data structure.
		
		Moreover, when the number of data dimensions reaches 10000, Tab.~\ref{tab:syndata2} shows that our proposed UTC methods, including fusing pairwise with triadic affinity, tetradic affinity, and both,  substantially outperform all other state of the art methods. The ACC clustering results of fusing triadic and fusing tetradic are higher than those of the second-highest baseline method by 14\% and 28\%, demonstrating the effectiveness of fusing high-order affinity with pairwise relation under HDLSS data. Additionally, the ACC achieves 100\% when utilizing both higher-order affinities, proving that fusing different-order affinities is more helpful in completely explaining the data structure than is merging only a single higher-order affinity.
		
		We also employ the t-SNE to visualize the differences in the embedded features after different methods. Most of the samples mix together when visualizing their embedding by t-SNE on the raw data, as  shown in Fig.~\ref{vis_syn2}(a). On the other hand, the pairwise affinity, shown in Fig.~\ref{vis_syn2}(b), has no clear-cut structure, except the diagonal elements having large values.   Therefore, it is challenging to discriminate the three subgroups when using the pairwise affinity matrix directly.
		Now, we fuse the three-order tensor affinity with pairwise affinity and apply UTC  to obtain a local embedding. t-SNE is applied to the resulting embedding to visualize the sample assignment distribution. The visualized result is shown in Fig.~\ref{vis_syn2}(c). The samples in one subgroup (Subgroup A) are clearly separated from the others,  while the remaining two subgroups (Subgroups B and C) are not well-distinguished. The upper left of the affinity matrix shows an obvious blocky structure, but the remaining part lacks a clear structure, as shown in Fig.~\ref{vis_syn2}(d), which confirms the results obtained by t-SNE. Alternatively, if we fuse the fourth-order tensor affinity with the pairwise matrix affinity and apply UTC to obtain another local embedding,  the visualization using t-SNE is shown in Fig.~\ref{vis_syn2}(e). The samples in Subgroup C clearly are separated from the other two subgroups, while the other two subgroups are merged. The affinity heatmap also validates the observation that Subgroup A in the lower-right corner possesses distinct value compared to the others, as shown in Fig.~\ref{vis_syn2}(f). Finally, we fuse the pairwise affinity with the third-order and fourth-order. UTC is employed to learn a uniform embedding that is then visualized via t-SNE, as shown in Fig.~\ref{vis_syn2}(g). The three subgroups are perfectly separated in this case. The diagonal elements of the corresponding affinity matrix also show three obvious block-like structures and an affinity difference between the three subgroups. Therefore, an accurate sample assignment can easily be obtained from the embedding by the proposed UTC. In summary, the above experimental results validate that the high-order tensor affinities make up for the insufficiency of the pairwise affinity matrix when corrupted by high-dimensional features.

		\begin{table}[]
			\caption{Statistics on six tested real-world datasets.}
			\begin{center}
				\begin{tabular}{c|c|c|c|c}
					\hline
					Dataset   &   Type           &Samples   & Features  &   Clusters     \\\hline
					Yale      &   Facial Image          &55        & 4096      &   5            \\
					Coil20    &   Object Image        &100       & 16384     &   5            \\
					Lymphoma  &   Bioinformatics &62        & 4702      &   3            \\
					DriveFace &   Facial Image   &78        & 307200    &   3            \\
					Lung      &   Bioinformatics &100       & 12600     &            3            \\
					GLI-85	  &   Bioinformatics &85        & 22283     &   2            \\\hline
				\end{tabular}
			\end{center}
			\label{tab:dataset}
		\end{table}

		\subsection{Experiments on Real Datasets}
		
		We then validated the power of UTC by applying it to six public benchmark datasets. The six datasets are chosen to be representative of various sources, including facial image, object image, and bioinformatics data. The statistics for the six datasets are summarized in Tab.~\ref{tab:dataset}. Additionally, a detailed description is provided.

		\begin{table*}[]
			\caption{Clustering performance on six real datasets}\label{tab:results}
			\centering
			\renewcommand\tabcolsep{10.0pt}
			\begin{tabular}{cclllll}
				\hline
				\multicolumn{1}{l}{Datasets} & Methods                & \multicolumn{1}{l}{ACC}&\multicolumn{1}{l}{ARI}&\multicolumn{1}{l}{F-SCORE}&\multicolumn{1}{l}{NMI}&\multicolumn{1}{l}{PURITY}\\ \hline \hline
				\multirow{8}{*}{YALE}  & SC                     & 0.4727 & 0.2528 & 0.423  & 0.4097 & 0.4909  \\
				& CAN                    & 0.5091 & 0.3316 & 0.4839 & 0.5046 & 0.5273  \\
				& CAVE                   & 0.5818  & 0.2871  & 0.4337 & 0.4307 & 0.6182\\
				& PPS                    & 0.5636  & 0.3166  & 0.4472 & 0.4453 & 0.5636\\
				& IPS2                   & 0.7218  & 0.4847  & 0.5848 & 0.5862 & 0.7818\\
				& UTC    & $\bm{0.8}$ &$\bm{0.5019}$ &$\bm{0.6036}$ &$\bm{0.67}$ &$\bm{0.8}$\\
				\hline
				\multirow{8}{*}{Coil}  & SC                     & 0.83 & 0.9068 & 0.9248 & 0.9352 &0.96 \\
				& CAN                    & 0.86 & 0.7660 & 0.7807 & 0.8886 &0.81 \\
				& CAVE                   & 0.84 & 0.6852 & 0.7462 & 0.7575 &0.84 \\
				& PPS                    & 0.93 & 0.8383 & 0.8694 & 0.869  &0.93 \\
				& IPS2   & $\bm{0.97}$  &$\bm{0.9280}$ & $\bm{0.9419}$ & $\bm{0.9460}$ & $\bm{0.97}$\\
				& UTC    & $\bm{0.97}$  &$\bm{0.9280}$ & $\bm{0.9419}$ & $\bm{0.9460}$ & $\bm{0.97}$\\
				\hline
				\multirow{8}{*}{DriveFace}  & SC                     & 0.641  & 0.3304 & 0.5921 & 0.4505 & 0.641  \\
				& CAN                    & 0.7436  & 0.3671& 0.5941 & 0.4486 & 0.7436  \\
				& CAVE                   & 0.8974  & 0.7274& 0.8182 & 0.7449 & 0.8974\\
				& PPS                    & 0.7436  & 0.4655& 0.6433 & 0.5144 & 0.718 \\
				& IPS2                   &  0.7436 & 0.4711& 0.6549 & 0.5731 & 0.7436 \\
				& UTC   & $\bm{0.9615}$ &$\bm{0.8869}$ &$\bm{0.9237}$ &$\bm{0.8675}$ &$\bm{0.9615}$\\
				\hline
				\multirow{8}{*}{Lymphoma} & SC                     & 0.8065 &0.5059 &0.7167 &0.6634 &0.8548 \\
				& CAN                    & 0.9839 &0.9471 &0.9733 &0.9255& 0.9839 \\
				& CAVE                   & 0.8065 &0.5060 &0.7167&0.6634& 0.8548  \\
				& PPS                    & 0.9194 &0.7570 &0.8696&0.7823& 0.9194  \\
				& IPS2                   & 0.9839 &0.9471 &0.9733&0.9255& 0.9839  \\
				& UTC    & $\bm{1}$ &$\bm{1}$ &$\bm{1}$ &$\bm{1}$ &$\bm{1}$ \\
				\hline
				\multirow{8}{*}{Lung}  & SC                     & 0.8  & 0.5855 & 0.675  & 0.6848 & 0.81 \\
				& CAN                    & 0.77 & 0.5973 & 0.6971 & 0.6995 & 0.85 \\
				& CAVE                   & 0.8  & 0.5855 & 0.675  & 0.6848 & 0.81  \\
				& PPS                    & 0.73 & 0.541  & 0.6472 & 0.6311 & 0.8  \\
				& IPS2                   & 0.82 & 0.541  & 0.7427 & $\bm{0.7289}$ & 0.85  \\
				& UTC    & $\bm{0.85}$ & $\bm{0.6941}$ & $\bm{0.7634}$ & 0.7219 & $\bm{0.85}$ \\
				\hline
				\multirow{8}{*}{GLI-85}  & SC                   & 0.6471 & 0.0731  & 0.5721 & 0.1385 & 0.6941 \\
				& CAN                    & 0.6588 & -0.0353 & 0.697  & 0.042  & 0.6941 \\
				& CAVE                   & 0.7412 & 0.1225 & 0.7376 & 0.171 & 0.7412 \\
				& PPS                    & 0.5294 & -0.0447 & 0.5777 & 0.1103 & 0.6941  \\
				& IPS2                   & 0.7294 & 0.2012  & 0.6267 & 0.2754 & 0.7294  \\
				& UTC    & $\bm{0.8118}$ & $\bm{0.3772}$ & $\bm{0.7177}$ & $\bm{0.2798}$ & $\bm{0.8118}$ \\
				\hline
				\hline
				
			\end{tabular}
		\end{table*}

		\begin{itemize}
			
			\item{\textbf{Yale}}  dataset has 165 grayscale images covering 15 different individuals. Each individual has 11 different facial and configuration images, such as with glasses, without glasses, center-light, left-light, happy, sad, and so on. We extracted 4096-dimensional raw pixel values of every image in 5 classes for our experiment.
			
			\item{\textbf{Coil20}} dataset contains 1440 images of 20 categories of objects. Each category has 72 images from different views. We sample 20 samples from 5 classes, and all images have a size of $128\times 128$.

			\item{\textbf{Lymphoma}} dataset, one of the most common subtypes of non-Hodgkin's lymphoma has two molecularly different forms of diffuse large B-cell lymphoma (DLBCL), which have gene expression patterns that indicate different stages of B cell differentiation. The dataset contains a total of 62 samples, 4702 based on expression fragments.

			\item{\textbf{DriveFace}} dataset contains images sequences of subjects while driving in real scenarios. It is composed of 606 samples of $6400$ features each, acquired over different days from 3 drivers with various facial features, such as glasses and beards. For each type, we pick 26 samples for the experiment.
			
			
			\item{\textbf{Lung}} dataset contains a total of 203 samples that can be divided into 5 classes, with 149, 21, 20, 6, and 17 samples, respectively. Each sample has 12,600 genes. We selected 40 samples of the first category and all samples of the remaining four categories for experiments.
			
			\item{\textbf{GLI-85}} dataset consists of the transcriptional data of 85 diffuse infiltrating gliomas from 74 patients. Those gliomas can be divided into two kinds of tumor subclasses. Each instance has 22283 features.
		\end{itemize}

		\begin{figure*}
			\centering
			\includegraphics[width = 17cm]{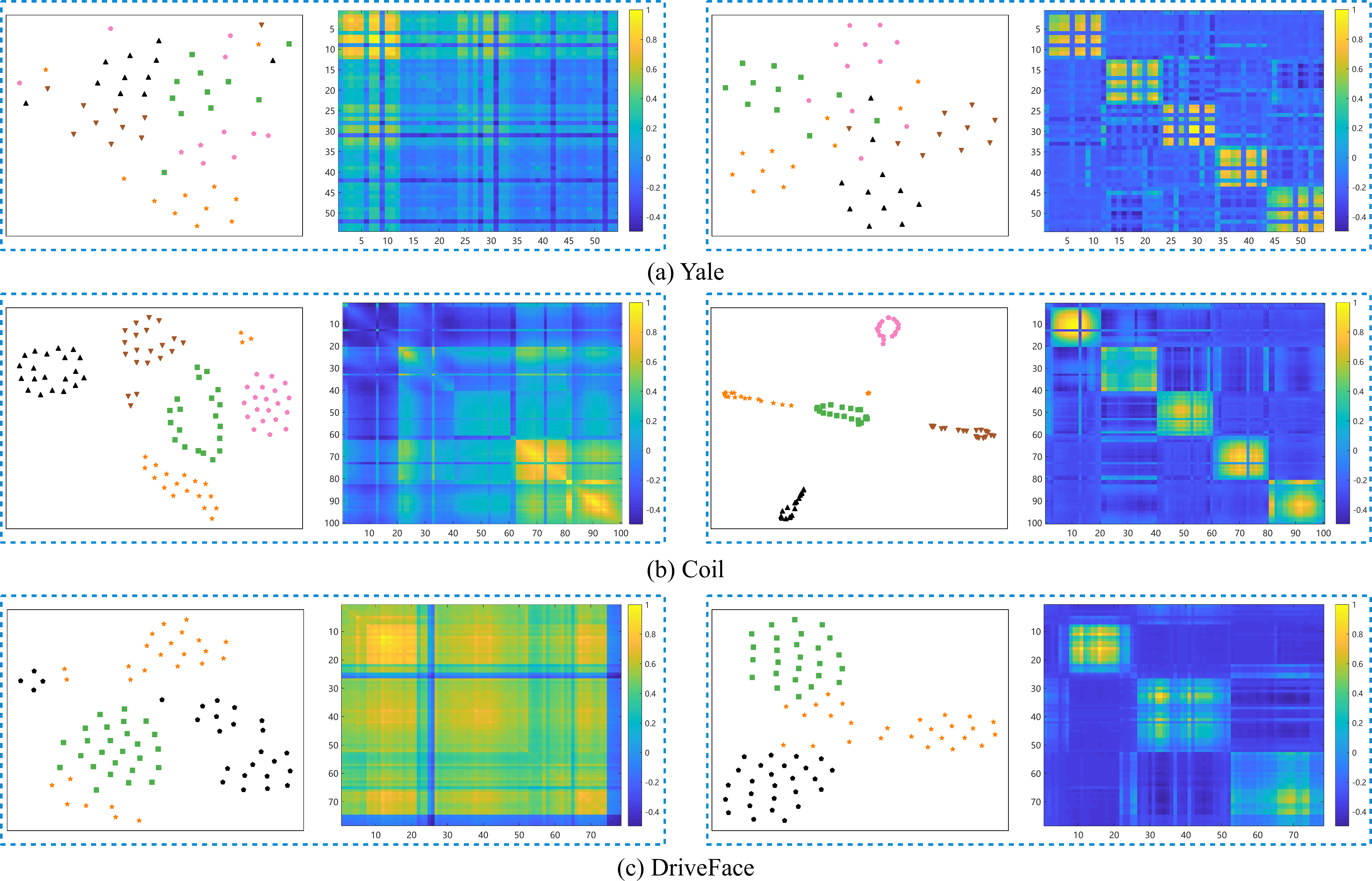}
			\caption{Visualization of the latent representations with t-SNE and corresponding affinity heatmaps of the (a) Yale; (b) Coil; and (c) DriveFace datasets. The first and second columns are the visualizations of the embedding by t-SNE on the raw data and the samples' affinity heatmap. The third and the fourth columns are the visualization results   of the obtained uniform embedding by UTC and the samples' affinity heatmap, respectively.}
			\label{Fig:tsne_1}
		\end{figure*}
		
		\begin{figure*}
			\centering
			\includegraphics[width = 17cm]{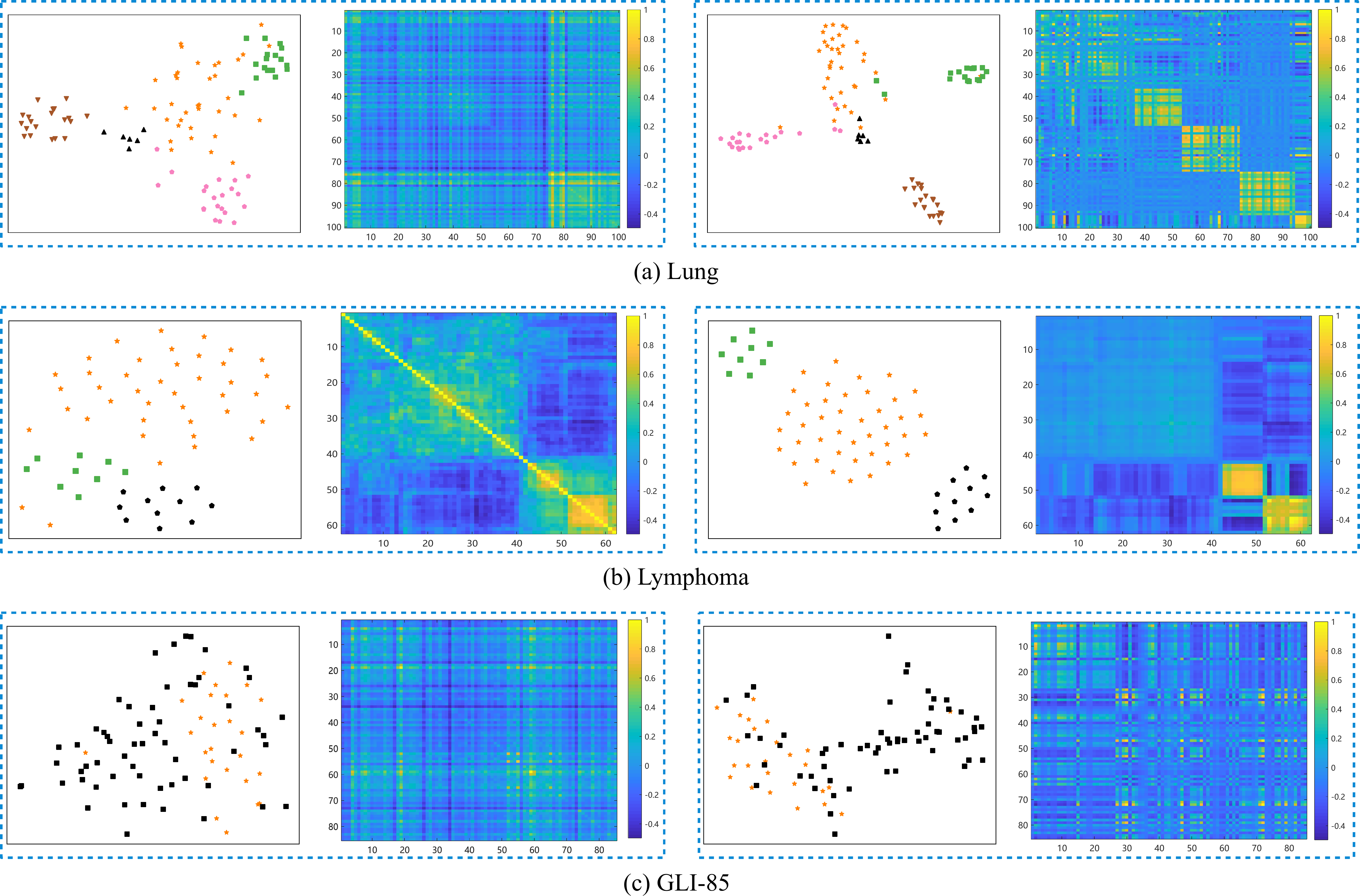}
			\caption{Visualization of the latent representations with t-SNE and corresponding affinity heatmaps of the (a) Lung; (b) Lymphoma, and (c) GLI-85 datasets. The first and second columns are the visualization   by t-SNE of the raw data and the samples' affinity heatmap.  The third and fourth columns are the visualization results   of the obtained uniform embedding by UTC and the samples' affinity heatmap, respectively.}
			\label{Fig:tsne_2}
		\end{figure*}

		We applied the proposed UTC as well as five comparative methods on the six datasets, and the results are summarized in Tab.~\ref{tab:results}. The bold value represents the best result. The clustering method that uses higher-order affinity uniformly outperforms the standard method using pairwise affinity. Therefore, the higher-order affinity constructed from the relationships of multiple samples can better describe the spatial structure of the data, which is advantageous when dealing with HDLSS data. Furthermore, our proposed UTC method significantly outperforms the competitive methods with respect to all five evaluation metrics for all types of data, including facial image, object image, and bioinformatics data.
		
		On the Yale dataset, the ACC clustering result of our method is over 33\% higher than that of SC and over 8\% higher than the second-highest IPS2. On the GLI-85 dataset, our results are more than 16\% and 6\% higher than those of the baseline and second-highest IPS2 methods. There are also 1\%, 7\%, 10\%, 2\%, and 3\% improvements compared with the second-best performance on the Coil, DriveFace, Colon, Lymphoma, and Lung datasets, respectively. IPS2 and UTC achieve the top two results on almost all datasets with respect to nearly all the metrics.
		
		We also visualized the sample spatial distribution after t-SNE\cite{t-SNE} and the heatmap of the affinity matrix on the  Yale, Coil, and DriveFace datasets in Fig.~\ref{Fig:tsne_1} and the Lymphoma, Lung, and GLI-85 datasets in Fig.~\ref{Fig:tsne_2}, respectively. The spatial distributions after t-SNE on the raw data and the resulting embedding are shown on the left side of the two columns, while the heatmaps of the raw data and the affinity matrix fusing with various orders are on the other side. In most figures, the low-dimensional embedding after UTC displays better separation and less overlap, thus making it easy for clustering.
		In comparison, the heatmap of similarity produced by SC and UTC further demonstrates the effectiveness of our method. Deep yellow represents a large proximity value. The affinity heatmap on the raw samples possesses highly blurred  boundaries, and there is no apparent block structure. However, the affinity matrix calculated from the low-dimensional embedding after UTC has clear-cut boundaries. The large value in the matrix generated by UTC is concentrated on the diagonal blocks, implying that the fused affinity can more accurately reflect the real relationship among the samples, thus yielding superior clustering performance in Tab.~\ref{tab:results}.
		Taking the Yale dataset as an example, the ACC of the clustering results obtained by UTC is 33\% higher than that of SC. The visualization results confirm our method's superiority intuitively, as shown in Fig.~\ref{Fig:tsne_1}(a). Most of the samples are mixed together in the visualization based on pairwise affinity, resulting in an inability to distinguish between different subgroups accurately, as shown on the far left of Fig.~\ref{Fig:tsne_1}(a). The heatmap of pairwise affinity confirms that only one cluster is significantly different from the other samples. Correspondingly, in the t-SNE visualization of the resulting embedding, most samples are clustered with their same cluster's partners. Five obvious blocks are displayed on the diagonal in the heatmap. The right side of Fig.~\ref{Fig:tsne_1}(a) confirms that UTC can better reflect the real structure of the data compared with single-order methods, which is critical for clustering.
		
		In summary,  the reason for the superior performance of UTC is that it utilizes the complementarity of high- and low-order affinities to comprehensively capture the spatial structure of data, thereby leading to a decisive effect on the clustering performance.
		
		\subsection{Computational Complexity Analysis}
		The UTC consists of two major computational components: constructing the high-order tensor affinity and solving the minimization problem in Eq.~(\ref{ALF_form_2}). In the first component, we calculate both the triadic and tetradic affinities by means of their $k$-nearest neighbors (KNN). Given $m$ samples, one can use the KNN strategy to reduce the time complexity from $\mathcal{O}(m^4)$ to $\mathcal O(mk^4)$. Thus, the two affinity tensors are sparse, with the number of nonzero elements being $s$. For the latter, we solve th model by means of an iterative strategy. The subproblem of $\vV_1$ based on the gradient descent strategy has a computational complexity of $\mathcal O(ncsm^2)$, where the $c$ denotes the number of clusters and the maximum iteration number is   $n$.   Solving the other sub-problem   $\vV_2$ takes $\mathcal O(csm^2)$  using the Germs algorithm. Thus, each iteration has a total computational cost of   $\mathcal O(ncsm^2) +\mathcal O(csm^2) = \mathcal O((n+1)ncsm^2)$. In total, the complexity to learn the uniform embedding in Algorithm~\ref{ALg_1} is $\mathcal O(mk^4+ K((n+1)ncsm^2))$, with  $K$ being the number of iterations.
		
		\section{Conclusion}
		The clustering of small-size samples with large dimensions is a bottleneck problem. We propose a unified learning model to effectively fuse multiple samples' affinities of different orders to obtain the sample's  uniform low-dimensional embedding. Tensor-vector products uniformly formulate the sample affinities of various orders, which opens a new door for studying multiple samples. Our method involves three matrix products: arithmetical, Khatri-Rao and Kronecker product. They are jointly optimized to yield a uniform low-dimensional embedding of the samples. Experiments on synthetic data demonstrate the power of fusing different-order affinities and experiments on several real datasets with small sample size yet large feature dimensionality show the effectiveness and superiority of the method over other popular approaches.
	\ifCLASSOPTIONcaptionsoff
  	\newpage
	\fi
	\bibliographystyle{IEEEtran}
	\bibliography{reference}

\begin{thebibliography}{10}
\providecommand{\url}[1]{#1}
\csname url@samestyle\endcsname
\providecommand{\newblock}{\relax}
\providecommand{\bibinfo}[2]{#2}
\providecommand{\BIBentrySTDinterwordspacing}{\spaceskip=0pt\relax}
\providecommand{\BIBentryALTinterwordstretchfactor}{4}
\providecommand{\BIBentryALTinterwordspacing}{\spaceskip=\fontdimen2\font plus
\BIBentryALTinterwordstretchfactor\fontdimen3\font minus
  \fontdimen4\font\relax}
\providecommand{\BIBforeignlanguage}[2]{{%
\expandafter\ifx\csname l@#1\endcsname\relax
\typeout{** WARNING: IEEEtran.bst: No hyphenation pattern has been}%
\typeout{** loaded for the language `#1'. Using the pattern for}%
\typeout{** the default language instead.}%
\else
\language=\csname l@#1\endcsname
\fi
#2}}
\providecommand{\BIBdecl}{\relax}
\BIBdecl

\bibitem{ref1}
T.~Hofmann and J.~M. Buhmann, ``Pairwise data clustering by deterministic
  annealing,'' \emph{Ieee transactions on pattern analysis and machine
  intelligence}, vol.~19, no.~1, pp. 1--14, 1997.

\bibitem{ref3}
T.~Kanungo, D.~M. Mount, N.~S. Netanyahu, C.~D. Piatko, R.~Silverman, and A.~Y.
  Wu, ``An efficient k-means clustering algorithm: Analysis and
  implementation,'' \emph{IEEE transactions on pattern analysis and machine
  intelligence}, vol.~24, no.~7, pp. 881--892, 2002.

\bibitem{Ncut}
J.~Shi and J.~Malik, ``Normalized cuts and image segmentation,'' \emph{IEEE
  Transactions on pattern analysis and machine intelligence}, vol.~22, no.~8,
  pp. 888--905, 2000.

\bibitem{RatioCut}
A.~Y. Ng, M.~I. Jordan, and Y.~Weiss, ``On spectral clustering: Analysis and an
  algorithm,'' in \emph{Advances in neural information processing systems},
  2002, pp. 849--856.

\bibitem{li2020modern}
H.~Li, S.~Tian, Y.~Li, Q.~Fang, R.~Tan, Y.~Pan, C.~Huang, Y.~Xu, and X.~Gao,
  ``Modern deep learning in bioinformatics,'' \emph{Journal of molecular cell
  biology}, vol.~12, no.~11, pp. 823--827, 2020.

\bibitem{tian2019clustering}
T.~Tian, J.~Wan, Q.~Song, and Z.~Wei, ``Clustering single-cell rna-seq data
  with a model-based deep learning approach,'' \emph{Nature Machine
  Intelligence}, vol.~1, no.~4, pp. 191--198, 2019.

\bibitem{9000790}
X.~{Peng}, J.~{Feng}, J.~T. {Zhou}, Y.~{Lei}, and S.~{Yan}, ``Deep subspace
  clustering,'' \emph{IEEE Transactions on Neural Networks and Learning
  Systems}, pp. 1--13, 2020.

\bibitem{8725928}
Y.~{Lu}, Y.~{Cheung}, and Y.~Y. {Tang}, ``Self-adaptive multiprototype-based
  competitive learning approach: A k-means-type algorithm for imbalanced data
  clustering,'' \emph{IEEE Transactions on Cybernetics}, pp. 1--15, 2019.

\bibitem{yang2018clustering}
W.~Yang, C.~Hui, D.~Sun, X.~Sun, and Q.~Liao, ``Clustering through probability
  distribution analysis along eigenpaths,'' \emph{IEEE Transactions on Systems,
  Man, and Cybernetics: Systems}, 2018.

\bibitem{chen2019knn}
Y.~Chen, L.~Zhou, S.~Pei, Z.~Yu, Y.~Chen, X.~Liu, J.~Du, and N.~Xiong,
  ``Knn-block dbscan: Fast clustering for large-scale data,'' \emph{IEEE
  Transactions on Systems, Man, and Cybernetics: Systems}, 2019.

\bibitem{xu2018accelerated}
X.~Xu, J.~Li, M.~Zhou, J.~Xu, and J.~Cao, ``Accelerated two-stage particle
  swarm optimization for clustering not-well-separated data,'' \emph{IEEE
  Transactions on Systems, Man, and Cybernetics: Systems}, vol.~50, no.~11, pp.
  4212--4223, 2018.

\bibitem{aggarwal2001surprising}
C.~C. Aggarwal, A.~Hinneburg, and D.~A. Keim, ``On the surprising behavior of
  distance metrics in high dimensional space,'' in \emph{International
  conference on database theory}.\hskip 1em plus 0.5em minus 0.4em\relax
  Springer, 2001, pp. 420--434.

\bibitem{franccois2007concentration}
D.~Fran{\c{c}}ois, V.~Wertz, and M.~Verleysen, ``The concentration of
  fractional distances,'' \emph{IEEE Transactions on Knowledge and Data
  Engineering}, vol.~19, no.~7, pp. 873--886, 2007.

\bibitem{SarkarGhosh-350}
S.~Sarkar and A.~K. Ghosh, ``On perfect clustering of high dimension, low
  sample size data,'' \emph{IEEE Transactions on Pattern Analysis and Machine
  Intelligence}, 2019.

\bibitem{BorysovHannig-352}
P.~Borysov, J.~Hannig, and J.~S. Marron, ``Asymptotics of hierarchical
  clustering for growing dimension,'' \emph{Journal of Multivariate Analysis},
  vol. 124, pp. 465--479, 2014.

\bibitem{HallMarron-351}
P.~Hall, J.~S. Marron, and A.~Neeman, ``Geometric representation of high
  dimension, low sample size data,'' \emph{Journal of the Royal Statistical
  Society: Series B (Statistical Methodology)}, vol.~67, no.~3, pp. 427--444,
  2005.

\bibitem{CAN}
F.~Nie, X.~Wang, and H.~Huang, ``Clustering and projected clustering with
  adaptive neighbors,'' in \emph{Proceedings of the 20th ACM SIGKDD
  international conference on Knowledge discovery and data mining}, 2014, pp.
  977--986.

\bibitem{ref12}
H.~Wang, G.~Xiao, Y.~Yan, and D.~Suter, ``Searching for representative modes on
  hypergraphs for robust geometric model fitting,'' \emph{IEEE transactions on
  pattern analysis and machine intelligence}, vol.~41, no.~3, pp. 697--711,
  2018.

\bibitem{ref13}
H.~C. Nguyen and H.~Mamitsuka, ``Learning on hypergraphs with sparsity,''
  \emph{IEEE transactions on pattern analysis and machine intelligence}, 2020.

\bibitem{SpectralHypergraphE}
D.~Zhou, J.~Huang, and B.~Sch{\"o}lkopf, ``Learning with hypergraphs:
  Clustering, classification, and embedding,'' \emph{Advances in neural
  information processing systems}, vol.~19, pp. 1601--1608, 2006.

\bibitem{CAGE}
S.~Agarwal, J.~Lim, L.~Zelnik-Manor, P.~Perona, D.~Kriegman, and S.~Belongie,
  ``Beyond pairwise clustering,'' in \emph{2005 IEEE Computer Society
  Conference on Computer Vision and Pattern Recognition (CVPR'05)},
  vol.~2.\hskip 1em plus 0.5em minus 0.4em\relax IEEE, 2005, pp. 838--845.

\bibitem{ref11}
S.~R. Bulo and M.~Pelillo, ``A game-theoretic approach to hypergraph
  clustering.'' in \emph{NIPS}, vol.~22.\hskip 1em plus 0.5em minus 0.4em\relax
  Citeseer, 2009, pp. 1571--1579.

\bibitem{DHSL}
Z.~Zhang, H.~Lin, Y.~Gao, and K.~BNRist, ``Dynamic hypergraph structure
  learning.'' in \emph{IJCAI}, 2018, pp. 3162--3169.

\bibitem{Zeigen}
G.~Li, L.~Qi, and G.~Yu, ``The z-eigenvalues of a symmetric tensor and its
  application to spectral hypergraph theory,'' \emph{Numerical Linear Algebra
  with Applications}, vol.~20, no.~6, pp. 1001--1029, 2013.

\bibitem{AAAI}
D.~Ghoshdastidar and A.~Dukkipati, ``Spectral clustering using multilinear svd:
  Analysis, approximations and applications,'' in \emph{Proceedings of the AAAI
  Conference on Artificial Intelligence}, vol.~29, no.~1, 2015.

\bibitem{ref15}
H.~Peng, Y.~Hu, J.~Chen, W.~Haiyan, Y.~Li, and H.~Cai, ``Integrating tensor
  similarity to enhance clustering performance,'' \emph{IEEE Transactions on
  Pattern Analysis and Machine Intelligence}, 2020.

\bibitem{def_tensor_product}
S.~Rabanser, O.~Shchur, and S.~G{\"u}nnemann, ``Introduction to tensor
  decompositions and their applications in machine learning,'' \emph{arXiv
  preprint arXiv:1711.10781}, 2017.

\bibitem{max_N-cut}
U.~Von~Luxburg, ``A tutorial on spectral clustering,'' \emph{Statistics and
  computing}, vol.~17, no.~4, pp. 395--416, 2007.

\bibitem{fused_drawback}
S.~Sarkar and A.~K. Ghosh, ``On perfect clustering of high dimension, low
  sample size data,'' \emph{IEEE transactions on pattern analysis and machine
  intelligence}, vol.~42, no.~9, pp. 2257--2272, 2019.

\bibitem{low_embedding}
A.~Kumar, P.~Rai, and H.~Daume, ``Co-regularized multi-view spectral
  clustering,'' \emph{Advances in neural information processing systems},
  vol.~24, pp. 1413--1421, 2011.

\bibitem{t-SNE}
L.~Van~der Maaten and G.~Hinton, ``Visualizing data using t-sne.''
  \emph{Journal of machine learning research}, vol.~9, no.~11, 2008.

\end{thebibliography}

	\end{document}